\documentclass{article} % For LaTeX2e
\usepackage{iclr2021_conference,times}

% Optional math commands from https://github.com/goodfeli/dlbook_notation.
%%%%% NEW MATH DEFINITIONS %%%%%

\usepackage{amsmath,amsfonts,bm}

% Mark sections of captions for referring to divisions of figures

% Highlight a newly defined term

% Figure reference, lower-case.

% Figure reference, capital. For start of sentence

% Section reference, lower-case.

% Section reference, capital.

% Reference to two sections.

% Reference to three sections.

% Reference to an equation, lower-case.
\def\eqref#1{equation~\ref{#1}}
% Reference to an equation, upper case

% A raw reference to an equation---avoid using if possible

% Reference to a chapter, lower-case.

% Reference to an equation, upper case.

% Reference to a range of chapters

% Reference to an algorithm, lower-case.

% Reference to an algorithm, upper case.

% Reference to a part, lower case

% Reference to a part, upper case

\def\floor#1{\lfloor #1 \rfloor}
\def\1{\bm{1}}

% Random variables

% rm is already a command, just don't name any random variables m

% Random vectors

% Elements of random vectors

% Random matrices

% Elements of random matrices

% Vectors

% Elements of vectors

% Matrix

% Tensor
\DeclareMathAlphabet{\mathsfit}{\encodingdefault}{\sfdefault}{m}{sl}
\SetMathAlphabet{\mathsfit}{bold}{\encodingdefault}{\sfdefault}{bx}{n}

% Graph

% Sets

% Don't use a set called E, because this would be the same as our symbol
% for expectation.

% Entries of a matrix

% entries of a tensor
% Same font as tensor, without \bm wrapper

% The true underlying data generating distribution

% The empirical distribution defined by the training set

% The model distribution

% Stochastic autoencoder distributions

 % Laplace distribution

% Wolfram Mathworld says $L^2$ is for function spaces and $\ell^2$ is for vectors
% But then they seem to use $L^2$ for vectors throughout the site, and so does
% wikipedia.

 % See usage in notation.tex. Chosen to match Daphne's book.

\DeclareMathOperator*{\argmax}{arg\,max}

\usepackage{amsmath}
\usepackage{amssymb}
\usepackage{kotex}
\usepackage{enumitem}
\usepackage{verbatim}
\usepackage{amsthm}
\usepackage{amsmath}
\usepackage{bm}
\usepackage{booktabs,makecell,multirow, caption, tabularx}
\usepackage{subcaption}
\usepackage{algorithm}
\usepackage{algorithmic}
\usepackage{url}
\usepackage{balance}
\usepackage{upquote}
\usepackage{times}
\usepackage{epsfig}
\usepackage{graphicx}
\usepackage{capt-of}
\usepackage{hyperref}
\usepackage{boldline}
\usepackage{mathtools}
\usepackage{thmtools}
\usepackage{thm-restate}

\newsavebox{\mybox}

\title{Removing Undesirable Feature Contributions Using Out-of-Distribution Data}

% Authors must not appear in the submitted version. They should be hidden
% as long as the \iclrfinalcopy macro remains commented out below.
% Non-anonymous submissions will be rejected without review.

%\author{Saehyung Lee\textsuperscript{1,2}, Changhwa Park\textsuperscript{1}, Hyungyu %Lee\textsuperscript{1}, Jihun Yi\textsuperscript{1}, Jonghyun Lee\textsuperscript{1}, Sungroh %Yoon\textsuperscript{1,2}\thanks{Correspondence to: Sungroh Yoon %\href{mailto:sryoon@snu.ac.kr}{sryoon@snu.ac.kr}.}\\
%${}^1$Electrical and Computer Engineering, ASRI, INMC, and Institute of Engineering Research\\
%${}^2$Artificial Intelligence Institute\\
\author{Saehyung Lee, Changhwa Park, Hyungyu Lee, Jihun Yi, Jonghyun Lee, Sungroh Yoon\thanks{Correspondence to: Sungroh Yoon \href{mailto:sryoon@snu.ac.kr}{sryoon@snu.ac.kr}.}\\
Electrical and Computer Engineering, AIIS, ASRI, INMC, and Institute of Engineering Research\\
Seoul National University\\
Seoul 08826, South Korea\\
\texttt{\{halo8218,omega6464,rucy74,t080205,leejh9611,sryoon\}@snu.ac.kr} \\
%\And
%Ji Q. Ren \& Yevgeny LeNet \\
%Department of Computational Neuroscience \\
%University of the Witwatersrand \\
%Joburg, South Africa \\
%\texttt{\{robot,net\}@wits.ac.za} \\
%\AND
%Coauthor \\
%Affiliation \\
%Address \\
%\texttt{email}
}

% The \author macro works with any number of authors. There are two commands
% used to separate the names and addresses of multiple authors: \And and \AND.
%
% Using \And between authors leaves it to \LaTeX{} to determine where to break
% the lines. Using \AND forces a linebreak at that point. So, if \LaTeX{}
% puts 3 of 4 authors names on the first line, and the last on the second
% line, try using \AND instead of \And before the third author name.

\iclrfinalcopy % Uncomment for camera-ready version, but NOT for submission.
\begin{document}

\maketitle

\begin{abstract}

  Several data augmentation methods deploy unlabeled-in-distribution~(UID) data to bridge the gap between the training and inference of neural networks. However, these methods have clear limitations in terms of availability of UID data and dependence of algorithms on pseudo-labels. Herein, we propose a data augmentation method to improve generalization in both adversarial and standard learning by using out-of-distribution~(OOD) data that are devoid of the abovementioned issues. We show how to improve generalization theoretically using OOD data in each learning scenario and complement our theoretical analysis with experiments on CIFAR-10, CIFAR-100, and a subset of ImageNet.
  The results indicate that undesirable features are shared even among image data that seem to have little correlation from a human point of view.
  We also present the advantages of the proposed method through comparison with other data augmentation methods, which can be used in the absence of UID data.
  Furthermore, we demonstrate that the proposed method can further improve the existing state-of-the-art adversarial training.
\end{abstract}

\section{Introduction}\label{introduction}

The power of the enormous amount of data suggested by the empirical risk minimization~(ERM) principle~\citep{vapnik} has allowed deep neural networks~(DNNs) to perform outstandingly on many tasks, including computer vision~\citep{imagenet_2012} and natural language processing~\citep{nlp_2012}.
However, most of the practical problems encountered by DNNs have high-dimensional input spaces, and nontrivial generalization errors arise owing to the curse of dimensionality~\citep{curse_of_dimension}.
Moreover, neural networks have been found to be easily deceived by
%human imperceptible
adversarial perturbations with a high degree of confidence~\citep{adversarial_example}.
Several studies~\citep{fgsm_attack, imagenet_2012} have been conducted to address these generalization problems resulting from ERM.
Most of them handled the generalization problems by extending the training distribution~\citep{pgd_attack, avmixup}.
Nevertheless, it has been demonstrated that more data are needed to achieve better generalization~\citep{more_data_adversarial}.
Recent methods~\citep{unlabeled_adv, selftraining} introduced unlabeled-in-distribution~(UID) data to compensate for the lack of training samples.
However, there are limitations associated with these methods.
First, obtaining suitable UID data for selected classes is challenging.
Second, when applying supervised learning methods on pseudo-labeled data,
the effect of data augmentation depends heavily on the accuracy of the pseudo-label generator.

In our study, in order to break through the limitations outlined above,
we propose an approach that promotes robust and standard generalization using out-of-distribution~(OOD) data.
Especially, motivated by previous studies demonstrating the existence of common adversarial space among different images or even datasets~\citep{transferability_adv, gan_adv}, we show that OOD data can be leveraged for adversarial learning.
Likewise, if the OOD data share the same undesirable features as those of the in-distribution data in terms of standard generalization, they can be leveraged for standard learning.
By definition, in this work, the classes of the OOD data differ from those of the in-distribution data, and our method do not use the label information of the OOD data. Therefore the proposed method is free from the previously mentioned problems caused by UID data.
We present a theoretical model which demonstrates how to improve generalization using OOD data in both adversarial and standard learning.
In our theoretical model, we separate desirable and undesirable features and show how the training on OOD data, which shares undesirable features with in-distribution data, changes the weight values of the classifier.
%Based on the theoretical analysis, we introduce \textit{Out-of-distribution data Augmented Training (OAT)} which assigns a uniform distribution label to all the OOD data samples to remove the influence of undesirable features in adversarial and standard learning.
Based on the theoretical analysis, we introduce \emph{out-of-distribution data augmented training} (OAT), which assigns a uniform distribution label to all the OOD data samples to remove the influence of undesirable features in adversarial and standard learning. In the proposed method, each batch is composed of training data and OOD data, and OOD data regularize the training so that only features that are strongly correlated with class labels are learned.
We complement our theoretical findings with experiments on CIFAR-10~\citep{cifar_dataset}, CIFAR-100~\citep{cifar_dataset}, and a subset of ImageNet~\citep{imagenet_dataset}.
In addition, we present the empirical evidence for the transferability of undesirable features through further studies on various datasets including Simpson Characters~\citep{simpson_dataset}, Fashion Product Images~\citep{fashion_dataset}, SVHN~\citep{svhn_dataset}, Places365~\citep{places365}, and VisDA-17~\citep{visda2017}.
\paragraph{Contributions}
(\romannumeral 1)~We propose a simple method, \emph{out-of-distribution data augmented training}~(OAT), to leverage OOD data for adversarial and standard learning, and theoretical analyses demonstrate how our proposed method can improve robust and standard generalization.
(\romannumeral 2)~The results of experimental procedures on CIFAR-10, CIFAR-100, and a subset of ImageNet suggest that OAT can help reduce the generalization gap in adversarial and standard learning.
(\romannumeral 3)~By applying OAT using various OOD datasets, it is shown that undesirable features are shared among diverse image datasets. It is also demonstrated that OAT can effectively extend training distribution by comparison with other data augmentation methods that can be employed in the absence of UID data.
(\romannumeral 4)~The state-of-the-art adversarial training method using UID data is found to further improve by incorporating the proposed method of leveraging OOD data.
%In summary, our paper makes the following contributions:
%\begin{itemize}[leftmargin=*, nolistsep]
%    \item We propose a simple method, \emph{Out-of-distribution data Augmented Training}~(OAT), to leverage OOD data for adversarial and standard learning and present theoretical analyses which demonstrate how our proposed method improves robust and standard generalization.
%    \item We analyze our proposed method with the results of experiments on CIFAR-10, CIFAR-100, and a subset of ImageNet, and show that OAT can help reduce the generalization gap in adversarial and standard learning.
%    \item By applying OAT with various OOD datasets, we show that undesirable features are shared among diverse image datasets. We also demonstrate that OAT can effectively extend training distribution by comparison with other data augmentation methods that can be used in the absence of UID data.
%\end{itemize}

\section{Background}\label{background}

\paragraph{Undesirable features in adversarial learning}
%Recent works~\citep{odds_with_accuracy, not_bugs_are_features} analyzed the adversarial examples in the existence of a distinction between robust features and non-robust features. They indicated that the adversarial vulnerability of DNNs can result from the non-robust features of input data which are helpful for standard classification but have an detrimental effect on robust classification.
%The concept of robust and non-robust features was recently introduced~\citep{odds_with_accuracy, not_bugs_are_features} to analyze adversarial examples.
\citet{odds_with_accuracy} demonstrated the existence of a trade-off between standard accuracy and adversarial robustness with the distinction between robust and non-robust features. 
They showed the possibility that adversarial robustness is incompatible with standard accuracy by constructing a binary classification task that the data model consists of input-label pairs $(\bm{\mathnormal{x}}, \mathnormal{y}) \in \mathbb{R}^{d+1} \times \{\pm1\}$ sampled from a distribution as follows:
\begin{equation} \label{eq1}
    \mathnormal{y}\stackrel{u.a.r}{\sim} \{-1, +1\},
    \quad\mathnormal{x}_1=
    \begin{cases}
    +\mathnormal{y}&\textup{w.p.}\;\mathnormal{p}\\
    -\mathnormal{y}&\textup{w.p.}\; 1-\mathnormal{p}
    \end{cases},
    \quad\mathnormal{x}_2,\dots,\mathnormal{x}_{d+1}\stackrel{i.i.d.}{\sim}\mathcal{N}(\epsilon\mathnormal{y}, 1).
\end{equation}
Here, $\mathnormal{x}_1$ is a robust feature that is strongly correlated with the label, and the other features $\mathnormal{x}_2,\dots,\mathnormal{x}_{d+1}$ are non-robust features that are weakly correlated with the label.
$\epsilon$ is small but sufficiently large such that a simple classifier attains a high standard accuracy, and $\mathnormal{p}\geq0.5$. 
To characterize adversarial robustness, the definitions of expected standard loss~$\beta_s$ and expected adversarial loss~$\beta_a$ for a data distribution $D$ are defined as follows:
\begin{equation} \label{eq2}
    \beta_s=\mathop{\mathbb{E}}_{(\bm{\mathnormal{x}},\mathnormal{y})\mathtt{\sim}D}\left[ \mathcal{L}(\bm{\mathnormal{x}},\mathnormal{y};\theta{}) \right],\quad
    \beta_a=\mathop{\mathbb{E}}_{(\bm{\mathnormal{x}},\mathnormal{y})\mathtt{\sim}D}\left[\max_{\mathbf{\bm{\delta}} \in S} \mathcal{L}(\bm{\mathnormal{x} + \delta},\mathnormal{y};\theta{}) \right].
\end{equation}
Here, $\mathcal{L}(;\theta)$ is the loss function of the model, and $S$ represents the set of perturbations that the adversary can apply to deceive the model. 
For Equation~(\ref{eq1}), \citet{odds_with_accuracy} showed that the following classifier can yield a small expected standard loss:
\begin{equation} \label{eq3}
    f_{\textup{avg}}(\bm{\mathnormal{x}})=\textup{sign}(\bm{\mathnormal{w}}^\top_{\textup{unif}}\bm{\mathnormal{x}}),
    \quad \textup{where}\;\bm{\mathnormal{w}}_{\textup{unif}}=\left[ 0,\frac{1}{d},\dots,\frac{1}{d} \right].
\end{equation}
They also proved that the classifier is vulnerable to adversarial perturbations, and that adversarial training results in a classifier that assigns zero weight values to non-robust
features.

\paragraph{Transferability of Adversarial Perturbations}
\citet{transferability_adv} produced domain-agnostic adversarial perturbations, thereby showing common adversarial space among different datasets. They showed that an adversarial function trained on Paintings, Cartoons or Medical data can deceive the classifier on ImageNet data with a high success rate.
The study findings show that even datasets from considerably different domains share non-robust features. Therefore a method for supplementing the data needed for adversarial training is presented herein.

\paragraph{Undesirable features in standard learning}
% \citet{highfreqency} noted that convolutional neural networks~(CNN) can capture high-frequency components in images that are almost imperceptible to a human-an ability thought to be closely related to the generalization behaviors of CNNs, especially the capacity in memorizing random labels.
\citet{highfreqency} noted that convolutional neural networks~(CNN) can capture high-frequency components in images that are almost imperceptible to a human.
This ability is thought to be closely related to the generalization behaviors of CNNs, especially the capacity in memorizing random labels.
Several studies~\citep{cnntexturebias, rebias} reported that CNNs are biased towards local image features, and the generalization performance can be improved by regularizing that bias.
In this context, a method of regularizing undesirable feature contributions using OOD data is proposed, assuming that undesirable features arise from the bias of CNNs or insufficient training data and are widely distributed in the input space.

\section{Methods}\label{methods}

\subsection{Theoretical motivation}\label{theretical}

In this section, we analyze theoretically how OOD data can be used to make up for the insufficient training samples in adversarial training based on the dichotomy between robust and non-robust features.
%The effect of using OOD data for standard learning can also be demonstrated in a similar way, and we provide its theoretical motivation in Appendix.
The theoretical motivation of using OOD data to reduce the contribution of undesirable features in standard learning can be found in Appendix~\ref{B}.

\paragraph{Setup and overview}
% We denote in-distribution data as target data.
We denote in-distribution data as target data.
Given a target dataset $\{(\tilde{\bm{\mathnormal{x}}}^i, {\mathnormal{y}}^i)\}_{i=1}^n\subset\mathcal{X}\times\{\pm1\}$ sampled from a data distribution~$\tilde{D}$, where $\mathcal{X}$ is the input space, we suppose that a feature extractor $\Phi:\mathcal{X} \to \mathcal{Z}\subset\mathbb{R}^d$ and a linear classification model are trained on $\{(\tilde{\bm{\mathnormal{x}}}^i, {\mathnormal{y}}^i)\}_{i=1}^n$ and target feature-label pairs~$\{(\Phi(\tilde{\bm{\mathnormal{x}}}^i), {\mathnormal{y}}^i)\}_{i=1}^n$, respectively, to yield the small expected standard loss of the classification model.
We then define an OOD dataset $\{\hat{\bm{\mathnormal{x}}}^i\}_{i=1}^m\subset\mathcal{X}$ sampled from a data distribution~$\hat{D}$ that has the same distribution of non-robust features as that of $\tilde{D}$ with reference to the preceding studies~\citep{transferability_adv, universal_adv}.
After fixing $\Phi$ to facilitate the theoretical analysis on this framework, we demonstrate how adversarial training on the OOD dataset affects the weight values of our classifier.

\paragraph{Our data model}
The feature extractor~$\Phi$ can be considered to consist of several feature extractors~$\phi:\mathcal{X} \to \mathcal{Z}\subset\mathbb{R}$.
%This can be understood in the same sense that CNNs use multiple independent convolutional filters to extract a feature vector in a layer.
Hence, we can set the distributions of the target feature-label pair~$(\Phi(\tilde{\bm{\mathnormal{x}}}), \mathnormal{y})=(\tilde{\bm{\mathnormal{z}}},\mathnormal{y})\in\mathbb{R}^d\times\{\pm1\}$
and the OOD feature vector $\Phi(\hat{\bm{\mathnormal{x}}})=\hat{\bm{\mathnormal{z}}}\in\mathbb{R}^d$ as follows:
\begin{equation}\label{eq4}
\begin{gathered}
    \mathnormal{y}\stackrel{u.a.r}{\sim} \{-1, +1\},\quad
    \tilde{\mathnormal{z}}_1\stackrel{}{\sim}\mathcal{N}(y,\mathnormal{u}^2),
    \quad\tilde{\mathnormal{z}}_2,\dots,\tilde{\mathnormal{z}}_{d+1}\stackrel{i.i.d.}{\sim}\mathcal{N}(\eta\mathnormal{y}, 1),\\
    \mathnormal{q}\stackrel{u.a.r}{\sim} \{-1, +1\},\quad
    \hat{\mathnormal{z}}_1\stackrel{}{\sim}\mathcal{N}(0,\mathnormal{v}^2),
    \quad\hat{\mathnormal{z}}_2,\dots,\hat{\mathnormal{z}}_{d+1}\stackrel{i.i.d.}{\sim}\mathcal{N}(\eta\mathnormal{q}, 1),
\end{gathered}
\end{equation}
where $u.a.r$ stands for uniformly at random.
From here on, we will only deal with the OOD data, therefore the accents~(tilde and caret) that distinguish between target data and OOD data are omitted.
In Equation~(\ref{eq4}), the feature $\mathnormal{z}_1=\phi_1(\bm{\mathnormal{x}})$ is the output of a robust feature extractor~$\phi_1$, and the other features~$\mathnormal{z}_2,\dots,\mathnormal{z}_{d+1}$ are those of non-robust feature extractors~$\phi_2,\dots,\phi_{d+1}$.
Since the OOD input vectors do not have the same robust features as the target input vectors, $\mathnormal{z}_1$ has zero mean and a small variance.
Furthermore, because the OOD data have the same distribution of non-robust features as the target data, $\mathnormal{z}_2,\dots,\mathnormal{z}_{d+1}$ have a non-zero mean and a larger variance than the robust features.
In addition, $\mathnormal{q}$ represents the unknown label associated with the non-robust features, and $\eta$ is a non-negative constant which represents the degree of correlation between the non-robust features and the unknown label.
Please note that the input space of our classifier is the output space of $\Phi$ in our data model.
Therefore, $\eta$ is not limited to a small value even in the context of the $\ell_p\text-$bounded adversary.
Rather, the high degree of confidence that DNNs show for the adversarial examples~\citep{fgsm_attack} suggests that $\eta$ is large.

\paragraph{Our linear classification model}
According to Section~\ref{background}, we know that our linear classification model~(logistic regression), 
defined as follows, yields a low expected standard loss while demonstrating high adversarial vulnerability.
\begin{equation}\label{eq5}
    p(\mathnormal{y}=+1\mid\bm{\mathnormal{z}})=\sigma(\bm{\mathnormal{w}}^\top\bm{\mathnormal{z}}),
    \;p(\mathnormal{y}=-1\mid\bm{\mathnormal{z}})=1-\sigma(\bm{\mathnormal{w}}^\top\bm{\mathnormal{z}}),
    \quad \textup{where}\;\bm{\mathnormal{w}}=\left[ 0,\frac{1}{d},\dots,\frac{1}{d} \right].
\end{equation}
To observe the effect of adversarial training on the OOD dataset, we train our classifier by applying the stochastic gradient descent algorithm to the cross-entropy loss function~$\mathcal{L}(;\bm{\mathnormal{w}})$.
\begin{comment}
\begin{equation}
    -\sum^m_{i=1}\{\mathnormal{t}^i\ln\sigma(\bm{\mathnormal{w}}^\top\bm{\mathnormal{z}}^i)
    +(1-\mathnormal{t}^i)\ln(1-\sigma(\bm{\mathnormal{w}}^\top\bm{\mathnormal{z}}^i))
    \},
    \quad\textup{where}\;
    \mathnormal{t}^i=
    \textup{the target value of }\bm{\mathnormal{z}}^i.
\end{equation}
\end{comment}

Firstly, we construct the adversarial feature vector~$\Bar{\bm{\mathnormal{z}}}=\Phi(\bm{\mathnormal{x}}+\bm{\delta}):\bm{\delta}\in S$ against our classifier for adversarial training.
\newtheorem{theorem}{Theorem}
\begin{theorem} \label{theorem1}
    Let $\mathnormal{t}\in[0,1]$ be the given target value of the feature vector~$\bm{\mathnormal{z}}$ in our classification model, and $\lambda$ be a non-negative constant.
    Then, when $\mathnormal{t}=0.5$, the expectation of the adversarial feature vector is
    \begin{equation}\label{eq6}
        \mathbb{E}_{\bm{\mathnormal{z}}}\left[\Bar{z}_1\right]=\mathbb{E}_{\bm{\mathnormal{z}}}\left[z_1\right],
        \quad
        \mathbb{E}_{\bm{\mathnormal{z}}}\left[\Bar{z}_k\right]\approx\mathbb{E}_{\bm{\mathnormal{z}}}\left[z_k\right]+\lambda\cdot\mathnormal{q}, 
        \quad
        \textup{where}\;k\in\{2,\dots,d+1\}.
    \end{equation}
\end{theorem}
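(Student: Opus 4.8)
The plan is to construct the adversarial feature vector by an FGSM-style ascent step on the cross-entropy loss, carried out directly in feature space since $\Phi$ is held fixed. First I would compute the gradient of $\mathcal{L}(\bm{\mathnormal{z}},\mathnormal{t};\bm{\mathnormal{w}})$ with respect to $\bm{\mathnormal{z}}$. Using $\sigma'(s)=\sigma(s)(1-\sigma(s))$, the cross-entropy gradient collapses to the clean form $\nabla_{\bm{\mathnormal{z}}}\mathcal{L}=(\sigma(\bm{\mathnormal{w}}^\top\bm{\mathnormal{z}})-\mathnormal{t})\,\bm{\mathnormal{w}}$. Taking the adversarial feature vector to be $\Bar{\bm{\mathnormal{z}}}=\bm{\mathnormal{z}}+\lambda\,\textup{sign}(\nabla_{\bm{\mathnormal{z}}}\mathcal{L})$ with budget $\lambda$, and substituting $\bm{\mathnormal{w}}=[0,\tfrac1d,\dots,\tfrac1d]$, the first coordinate of the perturbation vanishes (because $w_1=0$) while every remaining coordinate picks up $\lambda\,\textup{sign}(\sigma(\bm{\mathnormal{w}}^\top\bm{\mathnormal{z}})-\mathnormal{t})$. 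This already delivers $\Bar{z}_1=z_1$ pointwise, hence $\mathbb{E}[\Bar{z}_1]=\mathbb{E}[z_1]$ exactly, and reduces the $k\ge 2$ claim to understanding the random sign $\textup{sign}(\sigma(\bm{\mathnormal{w}}^\top\bm{\mathnormal{z}})-\mathnormal{t})$.

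Next I would specialize to $\mathnormal{t}=0.5$. Since $\sigma(s)>\tfrac12\iff s>0$, we have $\textup{sign}(\sigma(\bm{\mathnormal{w}}^\top\bm{\mathnormal{z}})-0.5)=\textup{sign}(\bm{\mathnormal{w}}^\top\bm{\mathnormal{z}})$, so the perturbation along each non-robust coordinate is simply $\lambda\,\textup{sign}(\bm{\mathnormal{w}}^\top\bm{\mathnormal{z}})$. The key observation is that $\bm{\mathnormal{w}}^\top\bm{\mathnormal{z}}=\tfrac1d\sum_{k=2}^{d+1}z_k$ is an average of the $d$ non-robust features, each independently $\mathcal{N}(\eta\mathnormal{q},1)$, so $\bm{\mathnormal{w}}^\top\bm{\mathnormal{z}}\sim\mathcal{N}(\eta\mathnormal{q},1/d)$. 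As $d$ grows this concentrates sharply at $\eta\mathnormal{q}$, whose sign equals $\mathnormal{q}$ because $\eta\ge 0$ and $\mathnormal{q}\in\{\pm1\}$; thus $\textup{sign}(\bm{\mathnormal{w}}^\top\bm{\mathnormal{z}})\to \mathnormal{q}$ in the large-$d$ regime.

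The step I expect to be delicate is making the ``$\approx$'' precise. Conditioning on $\mathnormal{q}$, the variable $\textup{sign}(\bm{\mathnormal{w}}^\top\bm{\mathnormal{z}})$ equals $\mathnormal{q}$ except on the event that the Gaussian fluctuation flips the sign of the average, an event of probability $\mathbb{P}(\mathcal{N}(0,1)<-\eta\sqrt{d})$. Hence $\mathbb{E}[\textup{sign}(\bm{\mathnormal{w}}^\top\bm{\mathnormal{z}})]=\mathnormal{q}\,(1-2\,\mathbb{P}(\mathcal{N}(0,1)<-\eta\sqrt{d}))$, which tends to $\mathnormal{q}$ as $d\to\infty$ and is already exponentially close for moderate $d$ and non-negligible $\eta$ (consistent with the earlier modeling remark that $\eta$ is large). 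Taking expectations coordinatewise then gives $\mathbb{E}[\Bar{z}_k]=\mathbb{E}[z_k]+\lambda\,\mathbb{E}[\textup{sign}(\bm{\mathnormal{w}}^\top\bm{\mathnormal{z}})]\approx\mathbb{E}[z_k]+\lambda\,\mathnormal{q}$, which is exactly the claim. The main obstacle is therefore not the algebra but justifying that the sign-flip probability is negligible; I would make this rigorous with a standard Gaussian tail bound and phrase the conclusion as an asymptotic (or high-probability) statement, which is precisely what the approximate equality in the theorem encodes.
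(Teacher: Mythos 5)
Your proposal is correct and follows essentially the same route as the paper's proof: an FGSM-style step $\bar{\bm{z}}=\bm{z}+\lambda\,\mathrm{sign}(\nabla_{\bm{z}}\mathcal{L})$, with $w_1=0$ killing the perturbation on the robust coordinate and the sign of the logit determining the shift on the non-robust ones. The one place you improve on the paper is the key approximation: where the paper simply asserts that $\mathbb{E}\left[\mathrm{sign}(w_k(\sigma(\bm{w}^\top\bm{z})-\tfrac12))\right]\approx q$ ``because the model was trained to minimize the expected standard loss,'' you actually justify it by noting $\bm{w}^\top\bm{z}\sim\mathcal{N}(\eta q,1/d)$ and bounding the sign-flip probability by a Gaussian tail, which is the honest content of the ``$\approx$.'' The only step you gloss over that the paper makes explicit is the reduction from an input-space perturbation $\bm{\delta}_{\bm{x}}$ to a feature-space perturbation, which the paper handles by linearizing $\Phi$ (so that $\bm{\delta}_{\bm{z}}=\bm{\delta}_{\bm{x}}^\top\nabla_{\bm{x}}\bm{z}$); this is worth a sentence but does not affect correctness.
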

(All the proofs of the theorems in this paper can be found in Appendix~\ref{A}.)
Here, we assume the $\ell_\infty\text-$bounded adversary.
% and that $\Bar{\bm{\mathnormal{z}}}=\bm{\mathnormal{z}}+\lambda\cdot\textup{sign}(\nabla_{\bm{\mathnormal{z}}}\mathcal{L}(\bm{\mathnormal{z}}, \mathnormal{t}; \bm{\mathnormal{w}}))$ for the sake of simplicity.
%Note that $\lambda$ is not limited to a small value for the same reason as $\eta$ in Equation~(\ref{eq4}).
In Theorem~\ref{theorem1}, we can observe that the adversary pushes the non-robust features farther in the direction of the unknown label~$\mathnormal{q}$, which coincides with our intuition.
When the given target value~$\mathnormal{t}$ is $0.5$, the adversary will make our classification model output equal to zero or one to yield a large loss.

Our classification model is trained on the adversarial features shown in Theorem~\ref{theorem1}.
\begin{theorem}\label{theorem2}
    When $\mathnormal{t}=0.5$, the expected gradient of the loss function~$\mathcal{L}(\bar{\bm{\mathnormal{z}}}, \mathnormal{t}; \bm{\mathnormal{w}})$ with respect to the weight vector~$\bm{\mathnormal{w}}$ of our classification model is
    \begin{equation}\label{eq7}
        \mathbb{E}_{\bar{\bm{\mathnormal{z}}}}\left[\frac{\partial\mathcal{L}}{\partial\mathnormal{w}_1}\right]\approx0,
        \quad
        \mathbb{E}_{\bar{\bm{\mathnormal{z}}}}\left[\frac{\partial\mathcal{L}}{\partial\mathnormal{w}_k}\right]\approx\frac{1}{2}(\eta+\lambda),
        \quad
        \textup{where}\;k\in\{2,\dots,d+1\}.
    \end{equation}
\end{theorem}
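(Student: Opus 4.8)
The plan is to start from the closed-form gradient of the logistic cross-entropy loss and then push the expectation through using the distributional facts established in Theorem~\ref{theorem1} together with the data model~(\ref{eq4}). First I would record the standard identity that, writing $\mathnormal{a}=\bm{\mathnormal{w}}^\top\bar{\bm{\mathnormal{z}}}$ for the pre-activation, $\partial\mathcal{L}/\partial\mathnormal{w}_j=(\sigma(\mathnormal{a})-\mathnormal{t})\bar{z}_j$; this drops out of $\sigma'=\sigma(1-\sigma)$ after the two log-terms telescope. Setting $\mathnormal{t}=0.5$ reduces the target quantity to $\mathbb{E}_{\bar{\bm{\mathnormal{z}}}}[(\sigma(\mathnormal{a})-\tfrac12)\bar{z}_j]$, so the whole statement becomes a pair of expectation computations, one for the robust coordinate and one for the non-robust coordinates.

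For the robust coordinate $j=1$, the key observation is that $\mathnormal{w}_1=0$, so $\mathnormal{a}=\tfrac1d\sum_{k=2}^{d+1}\bar{z}_k$ does not involve the robust feature at all. Theorem~\ref{theorem1} gives $\mathbb{E}[\bar{z}_1]=\mathbb{E}[z_1]=0$, and since the adversary leaves the (zero-weighted) robust feature untouched, $\bar{z}_1$ is independent of the non-robust coordinates that make up $\mathnormal{a}$. The expectation therefore factorizes as $\mathbb{E}[\sigma(\mathnormal{a})-\tfrac12]\,\mathbb{E}[\bar{z}_1]=0$, establishing the first claim.

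The non-robust coordinates $k\ge2$ carry the real work. I would condition on the hidden label $\mathnormal{q}$ and write $\bar{z}_k=(\eta+\lambda)\mathnormal{q}+\varepsilon_k$ with $\varepsilon_k$ the zero-mean fluctuation of~(\ref{eq4}), so that $\mathbb{E}[\bar{z}_k\mid\mathnormal{q}]=(\eta+\lambda)\mathnormal{q}$. The subtlety is that $\bar{z}_k$ appears both as the multiplicative factor and inside $\mathnormal{a}$, so $\sigma(\mathnormal{a})$ and $\bar{z}_k$ are not independent. I would resolve this with a large-$d$ concentration argument: $\mathnormal{a}$ is an average of $d$ coordinates, hence $\mathrm{Var}(\mathnormal{a}\mid\mathnormal{q})=O(1/d)$ and the influence of any single $\bar{z}_k$ on $\mathnormal{a}$ is $O(1/d)$. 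Concretely, a first-order Taylor expansion $\sigma(\mathnormal{a})=\sigma(\bar{\mathnormal{a}})+\sigma'(\bar{\mathnormal{a}})(\mathnormal{a}-\bar{\mathnormal{a}})+\dots$ about the conditional mean $\bar{\mathnormal{a}}=(\eta+\lambda)\mathnormal{q}$, whose correction term is controlled by $\mathrm{Cov}(\mathnormal{a},\bar{z}_k\mid\mathnormal{q})=O(1/d)$, yields the conditional expectation $(\sigma((\eta+\lambda)\mathnormal{q})-\tfrac12)(\eta+\lambda)\mathnormal{q}$ up to $O(1/d)$ error.

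Finally I would average over $\mathnormal{q}\in\{\pm1\}$. Using the symmetry $\sigma(-s)=1-\sigma(s)$, the $\mathnormal{q}=+1$ and $\mathnormal{q}=-1$ contributions coincide and both equal $(\sigma(\eta+\lambda)-\tfrac12)(\eta+\lambda)$, so the sign of the hidden label cancels—exactly the behaviour one expects, since the gradient should shrink the non-robust weights regardless of $\mathnormal{q}$. Invoking the saturation regime $\sigma(\eta+\lambda)\approx1$, which the text justifies by $\eta$ being large in feature space, gives $\mathbb{E}[\partial\mathcal{L}/\partial\mathnormal{w}_k]\approx\tfrac12(\eta+\lambda)$, completing the proof. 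I expect the main obstacle to be making the decoupling of $\sigma(\mathnormal{a})$ from $\bar{z}_k$ precise; the Taylor-plus-concentration estimate above is the cleanest route, and its $O(1/d)$ remainder is precisely what the $\approx$ in the statement absorbs.
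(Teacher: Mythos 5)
Your proposal is correct and follows essentially the same route as the paper's proof: reduce to the gradient identity $\partial\mathcal{L}/\partial w_j=(\sigma(\bm{w}^\top\bar{\bm{z}})-\tfrac12)\bar{z}_j$, plug in the expectations from Theorem~\ref{theorem1}, and invoke saturation of the sigmoid at the adversarial point (the paper simply posits $\sigma(\bm{w}^\top\bar{\bm{z}})\approx\tfrac12(1+q)$ on the grounds of adversarial vulnerability, which is exactly your $\sigma((\eta+\lambda)q)$ with $\sigma(\eta+\lambda)\approx1$). Your treatment is in fact slightly more careful in two places — handling $j=1$ by independence of $\bar z_1$ from the pre-activation rather than by the saturation approximation, and making the decoupling of $\sigma(a)$ from $\bar z_k$ explicit via the $O(1/d)$ concentration argument — but these are refinements of the same argument, not a different one.
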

Thus, the adversarial training with $\mathnormal{t}=0.5$ on the OOD dataset leads to the weight values corresponding to the non-robust features converging to zero while preserving $\mathnormal{w}_1$ from the gradient update. 
This shows that we can reduce the impact of non-robust features using the OOD dataset.
We, however, should not only reduce the influence of the non-robust features, but also improve the classification accuracy using the robust feature.
This can be achieved through the adversarial training on the target dataset.
Accordingly, we show the effect of the adversarial training on the OOD dataset when $\mathnormal{w}_1>0$ in our example.
\begin{theorem}\label{theorem3}
    When $\mathnormal{t}=0.5$ and $\mathnormal{w}_1>0$, the expected gradient of the loss function~$\mathcal{L}(\bar{\bm{\mathnormal{z}}}, \mathnormal{t}; \bm{\mathnormal{w}})$ with respect to the weight vector~$\bm{\mathnormal{w}}$ of our classification model is  
    \begin{equation}\label{eq8}
        \mathbb{E}_{\bar{\bm{\mathnormal{z}}}}\left[\frac{\partial\mathcal{L}}{\partial\mathnormal{w}_1}\right]\approx\frac{1}{2}\lambda,
        \quad
        \mathbb{E}_{\bar{\bm{\mathnormal{z}}}}\left[\frac{\partial\mathcal{L}}{\partial\mathnormal{w}_k}\right]\approx\frac{1}{2}(\eta+\lambda),
        \quad
        \textup{where}\;k\in\{2,\dots,d+1\}.
    \end{equation}
\end{theorem}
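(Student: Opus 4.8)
The plan is to reduce everything to the elementary gradient of logistic regression and then track the single new effect that a strictly positive $w_1$ introduces. First I would record that for the cross-entropy loss the per-sample gradient has the closed form
\begin{equation*}
\frac{\partial \mathcal{L}}{\partial w_j} = \bigl(\sigma(\bm{w}^\top \bar{\bm{z}}) - t\bigr)\,\bar z_j ,
\end{equation*}
so with $t = 0.5$ the whole task is to evaluate $\mathbb{E}_{\bar{\bm{z}}}\bigl[(\sigma(\bm{w}^\top\bar{\bm{z}}) - \tfrac12)\,\bar z_j\bigr]$ for $j=1$ and for $j=k\ge 2$. The decisive difference from Theorem~\ref{theorem2} lies in the adversarial construction: since the $\ell_\infty$ adversary moves each coordinate along $\sign(\partial\mathcal{L}/\partial z_j)=\sign((\sigma-\tfrac12)\,w_j)$, a coordinate is perturbed only when its weight is nonzero. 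With $w_1=0$ the robust feature was frozen ($\bar z_1=z_1$), which is why its gradient vanished in Theorem~\ref{theorem2}; now that $w_1>0$ the same rule perturbs it as well, giving $\bar z_1\approx z_1+\lambda q$ alongside the familiar $\bar z_k\approx z_k+\lambda q$ inherited from Theorem~\ref{theorem1}.

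Next I would use that, because the non-robust coordinates have mean $\eta q$ with $\eta\ge 0$ while $z_1$ has zero mean and small variance, the sign of $\bm{w}^\top\bm{z}$ agrees with $q$ with high probability; after the adversarial push the logit $\bm{w}^\top\bar{\bm{z}}$ is large in magnitude with that same sign, so the sigmoid saturates and $\sigma(\bm{w}^\top\bar{\bm{z}})-\tfrac12\approx\tfrac12 q$. Substituting the perturbed features then reduces each expectation to a first moment against $q$:
\begin{equation*}
\mathbb{E}\!\left[\frac{\partial\mathcal{L}}{\partial w_1}\right]\approx \mathbb{E}\!\left[\tfrac12 q\,(z_1+\lambda q)\right],\qquad \mathbb{E}\!\left[\frac{\partial\mathcal{L}}{\partial w_k}\right]\approx \mathbb{E}\!\left[\tfrac12 q\,(z_k+\lambda q)\right].
\end{equation*}
Using $\mathbb{E}[q^2]=1$, the independence of $z_1$ from $q$ (so $\mathbb{E}[q z_1]=0$), and $\mathbb{E}[q z_k]=\eta$ from $z_k\sim\mathcal{N}(\eta q,1)$, the first expectation collapses to $\tfrac12\lambda$ and the second to $\tfrac12(\eta+\lambda)$, which is exactly Equation~(\ref{eq8}). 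The contrast with Theorem~\ref{theorem2} is then transparent: the only change is the $\tfrac12\lambda$ term in the $w_1$ gradient, and it arises solely because the now-perturbed robust coordinate carries a component $\lambda q$ that correlates with the saturated residual $\tfrac12 q$.

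The hard part will be justifying the two approximations rather than the algebra. I would need to argue that $\sign(\bm{w}^\top\bm{z})=q$ holds with probability close to one — this rests on the average of the $d$ non-robust coordinates concentrating near $\eta q$ while the single zero-mean, small-variance term $w_1 z_1$ cannot flip the sign — and that $\eta+\lambda$ is large enough that $\sigma$ is effectively $0$ or $1$, making the residual $\tfrac12 q$ up to exponentially small corrections. Both statements are where the error hidden in the $\approx$ symbols lives, and controlling them (bounding the contribution of the rare sign-flip events, and the saturation gap of the sigmoid) is the only delicate step; once they are in hand, the moment computation above is routine.
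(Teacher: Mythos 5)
Your proposal is correct and follows essentially the same route as the paper's own proof: the adversarial push $\bar z_1\approx z_1+\lambda q$ appears once $w_1>0$ because $\sign(w_1(\sigma(\bm{w}^\top\bm{z})-\tfrac12))\approx q$, the saturated-sigmoid approximation $\sigma(\bm{w}^\top\bar{\bm{z}})\approx\tfrac12(1+q)$ turns the gradient into a first moment against $q$, and the algebra collapses to $\tfrac12\lambda$ and $\tfrac12(\eta+\lambda)$ exactly as in Appendix~\ref{A}. Your version is if anything slightly more careful than the paper's (you take the expectation jointly over $q$ and $\bm{z}$ rather than treating $q$ as fixed, and you explicitly flag the two approximations whose error the $\approx$ hides), but the underlying argument is identical.
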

Theorem~\ref{theorem3} shows that when $\mathnormal{w}_1>0$, the adversarial training with $\mathnormal{t}=0.5$ on the OOD dataset reduces the influence of all the features in $\mathcal{Z}$.
However, we can see that the expected gradients for the weight values associated with
the non-robust features are always greater than the expected gradient for the weight value associated with the robust feature. 
In addition, the greater the value of $\eta$, the faster the weight value associated with it converges to zero.
This means that the contribution of non-robust features with high influence decreases rapidly.
In the case of multiclass classification, it is straightforward that $\mathnormal{t}=0.5$ corresponds to uniform distribution label~$\mathnormal{t}_{\textup{unif}}=[\frac{1}{c},\dots,\frac{1}{c}]$, where $\mathnormal{c}$ is the number of the classes.
Intuitively, the meaning of ($\mathbf{\mathnormal{x}}, \mathnormal{t}_{\textup{unif}}$) is that the input $\mathbf{\mathnormal{x}}$ lies on the decision boundary.
To reduce the training loss for ($\mathbf{\mathnormal{x}}, \mathnormal{t}_{\textup{unif}}$), the classifier will learn that the features of $\mathbf{\mathnormal{x}}$ do not contribute to a specific class, which can be understood as removing the contributions of the features of $\mathbf{\mathnormal{x}}$.
%Our theoretical results show that it is possible to reduce our model's sensitivity to the undesirable features through the training on OOD data with $\mathnormal{t}_{\textup{unif}}$.
%In other words, the results suggest that the standard and robust generalization can be improved by using OOD data for standard and adversarial learning that suffers from the generalization problem due to the lack of training samples.

\subsection{Out-of-distribution data augmented training}\label{oat}

Based on our theoretical analysis, we introduce \emph{Out-of-distribution data Augmented Training~(OAT)}.
%We formalize adversarial training first, then describe the details of OAT.
%Adversarial training~\cite{pgd_attack} substitutes adversarial examples for the training samples. 
%Given a dataset $\mathcal{D}$, the goal of adversarial training is to train DNNs through adversarial empirical risk minimization:
%\begin{equation}
%    \min_{\theta}\mathop\mathbb{E}_{(\bm{\mathnormal{x}},y)\in\mathcal{D}}\left[\max_{\mathbf{\bm{\delta}} \in S} \mathcal{L}(\bm{\mathnormal{x} + \delta},\mathnormal{y};\theta{}) \right].
%\end{equation}
OAT is the training on the union of the target dataset~$\mathcal{D}_t$ and the OOD dataset~$\mathcal{D}_o$.
When applying our proposed method, we need to consider the following two points:
1)~Temporary labels associated with OOD data samples are required for supervised learning, and
2)~the loss functions corresponding to $\mathcal{D}_t$ and $\mathcal{D}_o$ should be properly combined.

First, we assign a uniform distribution label~$\mathnormal{t}_{\textup{unif}}$ to all the OOD data samples as confirmed in our theoretical analysis. 
This labeling method enables us to leverage OOD data for supervised learning at no extra cost.
Moreover, it means that our method is completely free from the limitations of the methods using UID data~(see Section~\ref{introduction}).

Second, although OOD data can be used to improve the standard and robust generalization of neural networks, the training on target data is essential to enhance the classification accuracy of neural networks.
In addition, according to Theorem~\ref{theorem3}, adversarial training on the pairs of OOD data samples and $\mathnormal{t}_{\textup{unif}}$ affects the weight for robust features as well as that for non-robust features.
Hence, the balance between losses from $\mathcal{D}_t$ and $\mathcal{D}_o$ is important in OAT.
For this reason, we introduce a hyperparameter~$\alpha\in\mathbb{R}^+$ into our proposed method and train neural networks as follows:
\begin{equation}
\begin{gathered}
    \textup{OAT-A:}\quad
    \min_{\theta}\mathop\mathbb{E}_{(\bm{\mathnormal{x}}_t,y)\in\mathcal{D}_t}\left[\max_{\mathbf{\bm{\delta}} \in S} \mathcal{L}(\bm{\mathnormal{x}}_t + \bm{\delta},\mathnormal{y};\theta{}) \right]
    +\alpha\mathop\mathbb{E}_{\bm{\mathnormal{x}}_o\in\mathcal{D}_o}\left[\max_{\mathbf{\bm{\epsilon}} \in S} \mathcal{L}(\bm{\mathnormal{x}}_o + \bm{\epsilon},\mathnormal{t}_{\textup{unif}};\theta{}) \right],\\
    \textup{OAT-S:}\quad
    \min_{\theta}\mathop\mathbb{E}_{(\bm{\mathnormal{x}}_t,y)\in\mathcal{D}_t}\left[\mathcal{L}(\bm{\mathnormal{x}}_t,\mathnormal{y};\theta{}) \right]
    +\alpha\mathop\mathbb{E}_{\bm{\mathnormal{x}}_o\in\mathcal{D}_o}\left[\mathcal{L}(\bm{\mathnormal{x}}_o,\mathnormal{t}_{\textup{unif}};\theta{}) \right].
\end{gathered}
\end{equation}
Here, OAT-A and OAT-S represent OOD data augmented adversarial and standard learning, respectively. 
% We provide pseudo-code for the overall procedure of our method in Appendix~\ref{d}.
The pseudo-code for the overall procedure of our method is presented in Algorithm~\ref{algo}.

\begin{algorithm}
\caption{Out-of-distribution augmented Training~(OAT)}
\begin{algorithmic}[1]
\label{algo}
\REQUIRE Target dataset $\mathcal{D}_t$, OOD dataset $\mathcal{D}_o$, uniform distribution label $t_{\textup{unif}}$, batch size $n$, training iterations $T$, learning rate $\tau$, hyperparameter $\alpha$, adversarial attack function $\mathcal{G}$
\FOR{$t=1$ \textbf{to} $T$}
\STATE $(X_t,Y)
=\textup{SAMPLE}(\textup{dataset}=\mathcal{D}_t, \textup{size}=\frac{n}{2})$
\STATE $X_o
=\textup{SAMPLE}(\textup{dataset}=\mathcal{D}_o, \textup{size}=\frac{n}{2})$
\IF{Adversarial Learning}
\STATE $\left[\Bar{X}_t,\Bar{X}_o\right] \leftarrow \mathcal{G}(\left[X_t,X_o\right],\left[Y,t_{\textup{unif}}\right];\bm{\theta})$
\ELSIF{Standard Learning}
\STATE $\left[\Bar{X}_t,\Bar{X}_o\right] \leftarrow \left[X_t,X_o\right]$
\ENDIF
\STATE \emph{model update}:
\STATE $\bm{\theta} \leftarrow \bm{\theta} - \tau\cdot\nabla_{\theta}\textup{AVERAGE}(\frac{1}{2}\mathcal{L}(\Bar{X}_t,Y;\bm{\theta})+\frac{\alpha}{2}\mathcal{L}(\Bar{X}_o,t_{\textup{unif}};\bm{\theta}))$
\ENDFOR
\STATE \textbf{Output:} trained model parameter $\bm{\theta}$
\end{algorithmic}
\end{algorithm}

\section{Related studies}

\paragraph{Adversarial examples}
Many adversarial attack methods have been proposed, including the projected gradient descent~(PGD)  and Carlini \& Wagner (CW) attacks~\citep{pgd_attack, cw_attack}. The PGD attack employs an iterative procedure of the fast gradient sign method~(FGSM)~\citep{fgsm_attack} to find worst-case examples in which the training loss is maximized. CW attack finds adversarial examples using CW losses instead of cross-entropy losses. Recently, \citet{auto_attack} proposed autoattack~(AA), which is a powerful ensemble attack with two extensions of the PGD attack and two existing attacks~\citep{fab_attack, square_attack}.
%추가 2020 05 22
To defend against these adversarial attacks, various adversarial defense methods have been developed~\citep{fgsm_attack, alp_defense}. 
\citet{pgd_attack} introduced adversarial training that uses adversarial examples as training data, and \citet{trades_defense} proposed TRADES to optimize a surrogate loss which is a sum of the natural error and boundary error.

\paragraph{OOD detection}
\citet{uniformlabel} and \citet{outlier} dealt with the overconfidence problem of the confidence score-based OOD detectors.
They used uniform distribution labels as in our method to resolve the overconfidence issue.
In particular, \citet{ratio} addressed the overconfidence problem in adversarial settings. The authors proposed a method that is practically identical to OAT-A by combining adversarial training with ACET~\citep{acet}.
They did not, however, address the generalization problems of neural networks.
Specifically,
our theoretical results allow us to explain the classification performance improvement of classifiers that were only considered as secondary effects in the abovementioned studies.
Further related works can be found in Appendix~\ref{c}.

\section{Experimental results and discussion}

\subsection{Experimental setup}
\paragraph{OOD datasets}
We created OOD datasets from the 80 Million Tiny Images dataset~\citep{80mti}~(80M-TI), using the work of \citet{unlabeled_adv} for CIFAR-10 and CIFAR-100, respectively.
In addition, we resized~(using a bilinear interpolation) ImageNet to dimensions of $64\times64$ and $160\times160$ and divided it into datasets containing 10 and 990 classes, respectively; these are called ImgNet10 and ImgNet990.
Furthermore, we resized Places365 and VisDA-17 for the experiments on ImgNet10 and cropped the Simpson Characters~(Simpson), and Fashion Product~(Fashion) datasets to dimensions of $32\times32$ for the experiments on CIFAR10 and CIFAR100.
The details in sourcing the OOD datasets can be found in Appendix~\ref{e}.

\paragraph{Implementation details}
Implementation details including the hyperparameter $\alpha$, architectures, batch sizes, and training iterations are summarized in Appendix~\ref{f}.
All models compared in the same target dataset are trained in the same training batch size and iterations to ensure a fair comparison.
In other words, OAT have a batch size of $\textup{target}\frac{n}{2}+\textup{OOD}\frac{n}{2}$, which would be compared with a model that is normally trained with a batch size of $n$.
To evaluate the adversarial robustness of the models in our experiments, we apply several adversarial attacks including PGD, CW, and AA.
Note that we denote PGD and CW attacks with $T$ iterative steps as PGD$T$ and CW$T$, respectively, and the original test set as Clean.
We compare the following models in our experiments\footnote{Our codes are available at \url{https://github.com/Saehyung-Lee/OAT}.}:
\begin{enumerate}[leftmargin=*, nolistsep]
    \item Standard: The model which is normally trained on the target dataset.
    \item PGD: The model trained using PGD-based adversarial training on the target dataset.
    \item TRADES: The model trained using TRADES on the target dataset.
    \item $\textup{OAT}_{\textup{PGD}}$: The model which is adversarially trained with OAT based on a PGD approach.
    \item $\textup{OAT}_{\textup{TRADES}}$: The model which is adversarially trained with OAT based on TRADES.
    \item $\textup{OAT}_{\mathcal{D}_o}$: The model which is normally trained with OAT using the OOD dataset~$\mathcal{D}_o$.    
\end{enumerate}

\subsection{Study on the effectiveness of OAT in adversarial learning}
\begin{table*}[]
\centering
%\begin{tabular}{c|cccccc}
\caption{Accuracy~(\%) comparison of the OAT model with Standard, PGD, and TRADES on CIFAR10, CIFAR100, and ImgNet10~(64$\times$64) under different threat models. We show the improved results compared to the counterpart of each model in bold.}
\begin{tabular*}{\textwidth}{c @{\extracolsep{\fill}} cccccc}
%\toprule
Model & Target & OOD & Clean & PGD100 & CW100 & AA       \\ \midrule[.1em]
\multicolumn{1}{c}{Standard}  & \multirow{5}{*}{CIFAR10} & - & 95.48 & 0.00 & 0.00 & 0.00 \\
\multicolumn{1}{c}{PGD} & & - & 87.48 & 49.92 & 50.80 & 48.29  \\
\multicolumn{1}{c}{PGD+CutMix} & & - & 89.35 & 53.39 & 52.35 & 49.05  \\
\multicolumn{1}{c}{TRADES} & & - & 85.24 & 55.69 & 54.04 & 52.83 \\
\multicolumn{1}{c}{OAT\textsubscript{PGD}} & & 80M-TI & 86.63 & \textbf{56.77} & \textbf{52.38} & \textbf{49.98}  \\
\multicolumn{1}{c}{OAT\textsubscript{TRADES}} & & 80M-TI & \textbf{86.76} & \textbf{59.66} & \textbf{55.71} & \textbf{54.63} \\
%\multicolumn{1}{c}{Standard}  & \multirow{5}{*}{CIFAR10} & - & 95.48 & 0.00 & 0.00 & 0.00 & 0.00       \\
%\multicolumn{1}{c}{PGD} & & - & 87.48 & 50.41 & 49.92 & 51.11 & 50.80  \\
%\multicolumn{1}{c}{TRADES} & & - & 86.38 & 56.09 & 53.82 & 55.13 & 53.49 \\
%\multicolumn{1}{c}{OAT\textsubscript{PGD}} & & 80M-TI & \textbf{88.33} & \textbf{59.48} & \textbf{59.06} & \textbf{51.24} & 50.86  \\
%\multicolumn{1}{c}{OAT\textsubscript{TRADES}} & & 80M-TI & \textbf{87.12} & \textbf{62.59} & \textbf{60.77} & \textbf{55.77} & \textbf{53.95} \\
\midrule[.05em]

\multicolumn{1}{c}{Standard}  & \multirow{5}{*}{CIFAR100} & - & 78.57 & 0.02 & 0.00 & 0.00 \\
\multicolumn{1}{c}{PGD} & & - & 61.37 & 24.66 & 24.68 & 22.76  \\
\multicolumn{1}{c}{TRADES} & & - & 58.84 & 30.24 & 27.97 & 26.91 \\
\multicolumn{1}{c}{OAT\textsubscript{PGD}}& & 80M-TI & \textbf{61.54} & \textbf{30.02} & \textbf{27.85} & \textbf{25.36} \\
\multicolumn{1}{c}{OAT\textsubscript{TRADES}} & & 80M-TI & \textbf{63.07} & \textbf{34.23} & \textbf{29.02} & \textbf{27.83} \\
%\multicolumn{1}{c}{PGD} & & - & 61.37 & 24.98 & 24.66 & 24.82 & 24.68  \\
%\multicolumn{1}{c}{TRADES} & & - & 58.84 & 31.02 & 30.24 & 28.56 & 27.97 \\
%\multicolumn{1}{c}{OAT\textsubscript{PGD}}& & 80M-TI & \textbf{62.59} & \textbf{31.79} & %\textbf{31.74} & \textbf{28.00} & \textbf{27.84} \\
%\multicolumn{1}{c}{OAT\textsubscript{TRADES}} & & 80M-TI & \textbf{63.88} & \textbf{35.95} & %\textbf{34.93} & \textbf{29.17} & \textbf{28.28} \\
\midrule[.05em]

\multicolumn{1}{c}{Standard}  & \multirow{3}{*}{\begin{tabular}[c]{@{}c@{}}ImgNet10\\ (64 x 64)\end{tabular}} & - & 86.03 & 0.11 & 0.06 & 0.00      \\
\multicolumn{1}{c}{PGD} & & - & 82.80 & 48.77 & 48.86 & 48.34     \\
\multicolumn{1}{c}{OAT\textsubscript{PGD}}& & ImgNet990 & 81.91 & \textbf{59.03} & \textbf{54.69} & \textbf{53.83}      \\
%\multicolumn{1}{c}{Standard}  & \multirow{3}{*}{\begin{tabular}[c]{@{}c@{}}ImgNet10\\ (64 x 64)\end{tabular}} & - & 86.03 & 0.14 & 0.11 & 0.06 & 0.09      \\
%\multicolumn{1}{c}{PGD} & & - & 82.80 & 49.00 & 48.77 & 48.91 & 48.86     \\
%\multicolumn{1}{c}{OAT\textsubscript{PGD}}& & ImgNet990 & 82.20 & \textbf{60.83} & \textbf{60.74} & \textbf{54.29} & \textbf{54.17}      \\
\midrule[.05em]
 
\end{tabular*}
\label{tab:table-1}
\end{table*}
\begin{table*}[h]
\centering
\caption{Comparison of OAT using various OOD datasets for improving robust generalization on
CIFAR10 and ImgNet10~($64\times64$). The None represents the baseline model~(PGD).}
%\begin{tabular}{ccccclccccc}
\renewcommand{\arraystretch}{1}
\begin{tabular*}{\textwidth}{c @{\extracolsep{\fill}} cccccccc}
%\toprule
\multicolumn{1}{l}{} & \multicolumn{4}{c}{CIFAR10} & \multicolumn{3}{c}{ImgNet10 (64 x 64)} \\ \hlineB{2}
\multicolumn{1}{c|}{OOD}
& \multicolumn{1}{c}{None} & \multicolumn{1}{c}{SVHN} & \multicolumn{1}{c}{Simpson} &\multicolumn{1}{c|}{\begin{tabular}[c]{@{}c@{}}Fashion\end{tabular}}
& \multicolumn{1}{c}{None}
& \multicolumn{1}{c}{Places365}  &\multicolumn{1}{c}{\begin{tabular}[c]{@{}c@{}}VisDA17\end{tabular}}
%&\multicolumn{1}{c|}{\begin{tabular}[c]{@{}c@{}}ImgNet990\\ (A)\end{tabular}}
%&\multicolumn{1}{c|}{\begin{tabular}[c]{@{}c@{}}ImgNet990\\ (B)\end{tabular}}
\\ \hlineB{1.5}
\multicolumn{1}{c|}{Clean}    & \multicolumn{1}{c}{87.48}  & \multicolumn{1}{c}{86.16}   & \multicolumn{1}{c}{86.79}     &
\multicolumn{1}{c|}{85.84} & \multicolumn{1}{c}{82.80} &
\multicolumn{1}{c}{82.37} & \multicolumn{1}{c}{82.46}
%\multicolumn{1}{c|}{82.29} & \multicolumn{1}{c|}{81.91} &
\\ %\hline
\multicolumn{1}{c|}{PGD20}    & \multicolumn{1}{c}{50.41}  & \multicolumn{1}{c}{\textbf{53.70}}   & \multicolumn{1}{c}{\textbf{53.88}}     &
\multicolumn{1}{c|}{\textbf{53.27}} & \multicolumn{1}{c}{49.00} &
\multicolumn{1}{c}{\textbf{59.86}} & \multicolumn{1}{c}{\textbf{55.34}}
%\multicolumn{1}{c|}{60.57} & \multicolumn{1}{c|}{60.91} &
\\ %\hline
\multicolumn{1}{c|}{CW20}    & \multicolumn{1}{c}{51.11}  & \multicolumn{1}{c}{\textbf{52.21}}   & \multicolumn{1}{c}{\textbf{52.15}}     &
\multicolumn{1}{c|}{\textbf{51.70}} & \multicolumn{1}{c}{48.91} &
\multicolumn{1}{c}{\textbf{56.23}} & \multicolumn{1}{c}{\textbf{53.80}}
%\multicolumn{1}{c|}{54.97} & \multicolumn{1}{c|}{54.46} &
%\\ \cline{1-8}
\\ \bottomrule
\end{tabular*}
\label{tab:table-2}
\end{table*}
\paragraph{Evaluating adversarial robustness}
The improvements in the robust generalization performances of the PGD and TRADES models through the application of OAT are evaluated against PGD100, CW100, and AA with $\ell_\infty\text-$bound of $\frac{8}{255}$ and 0.031. The results are summarized in Table~\ref{tab:table-1} and indicate that OAT improves the robust generalization of all adversarial training methods tested regardless of the target dataset.
In particular, from the results against AA it can be seen that the effectiveness of OAT does not rely on obfuscated gradients~\citep{obfuscated_adversarial}. This is because AA removes the possibility of gradient masking through the application of a combination of strong adaptive attacks~\citep{fab_attack,auto_attack} and a black-box attack~\citep{square_attack}.

However, while OAT brings a significant improvement in robustness against PGD attacks, it is relatively less effective against CW attacks.
According to our theoretical analysis, these results imply that the OOD data have relatively few non-robust features used in the CW attacks.
In other words, powerful targeted attacks, such as CW attacks, are constructed using gradient information more selectively than untargeted attacks, such as PGD attacks.
To gain insight into this phenomenon, we train models with various hyperparameter~$\alpha$ values.
The results suggest that the greater the influence of OOD on the training process is, the higher the robustness against PGD attacks and the lower the robustness against CW attacks are~(see Appendix~\ref{H} for more details).

In the absence of UID data, various data augmentation methods other than OAT can be employed.
CutMix~\citep{cutmix}, a method of amplifying training data by cutting and pasting patches between training images, was recently proposed and exhibited excellent performance in classification and transfer learning tasks.
By applying CutMix to adversarial training, the advantages of OAT over other data augmentation methods are assessed.
Specifically, OAT brings a higher level of robustness than CutMix, as deduced from Table~\ref{tab:table-1}.
Because adversarial examples are closely related to the high-frequency components of images~\citep{highfreqency} and adversarial training reduces the model's sensitivity to these components, CutMix is an ineffective data augmentation method.
Although CutMix extends the global feature distribution, there is no significant difference in terms of local feature distribution.
Contrarily, OAT can effectively regularize the classifier by observing various undesirable features held by additional data in the learning process.

\paragraph{OAT with diverse OOD datasets}
Non-robust features are widely shared among different datasets, as demonstrated by applying OAT using various OOD datasets.
Table~\ref{tab:table-2} indicates that OAT improves robust generalization for all OOD datasets, including those that have little correlation with the target dataset from a human perspective.
In addition, given that relatively simple datasets with no background~(Fashion and VisDA-17) are less effective than others, we can suppose that non-robust features arise from the high complexity~\citep{visual_complexity} of images for natural image datasets, such as CIFAR and ImageNet.
%Further discussions of the results can be found in Appendix~\ref{K}.

\paragraph{When UID data are available}
In adversarial training, in-distribution data reduce the sensitivity of neural networks to non-robust features and provide robustly generalizable features.
On the other hand, the proposed theory shows that OAT takes advantage of the data amplification effect only for non-robust features using OOD data.
Therefore, the effectiveness of OAT is expected to decrease as more in-distribution data are augmented.
To observe such a trend empirically, the effect of OAT is investigated according to additional in-distribution data size using previously published pseudo-labeled data~\citep{unlabeled_adv}. \begingroup
\begin{figure}[!htb]
\begin{center}
\includegraphics[width=1.0\linewidth]{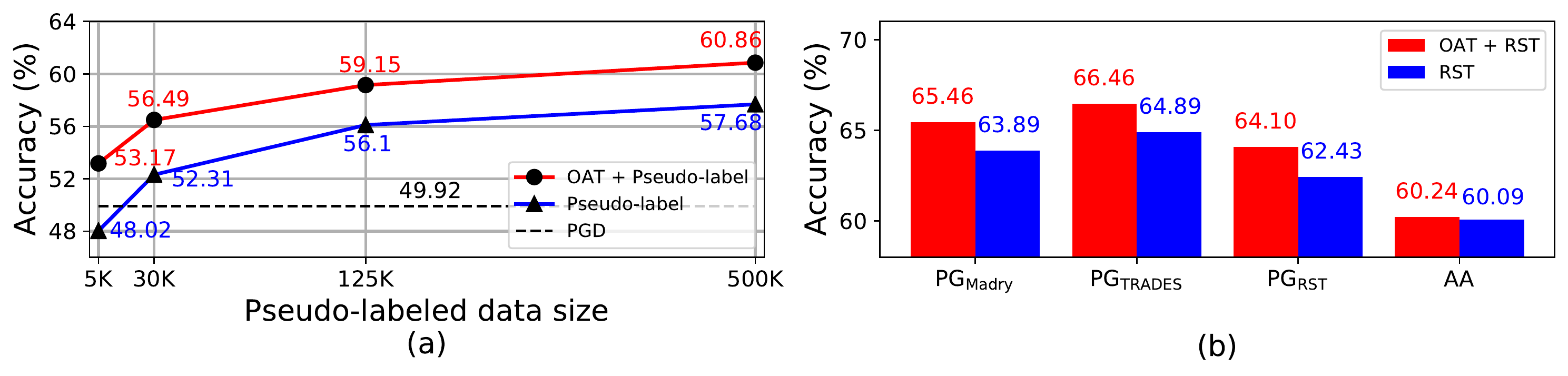}
% \subfigure[]{\includegraphics[width=0.45\textwidth]{Figure/fig2.pdf}}
% \subfigure[]{\includegraphics[width=0.45\textwidth]{Figure/fig1.pdf}}
\end{center}
%\caption{Comparisons of the performance of OAT when used with pseudo-labeled data. The trained models are evaluated under (a)~PGD100 and (b)~CW100 on CIFAR-10, respectively.}\label{figure2}
\caption{(a)~The effectiveness of OAT with increasing pseudo-labeled data size under PGD100 and (b)~the results of the state-of-the-art model improved by OAT under PG\textsubscript{TRADES}, PG\textsubscript{Madry}, PG\textsubscript{RST}~\citep{unlabeled_adv}, and AA on CIFAR-10.}\label{figure2}
\end{figure}

% \begin{figure}[!htb]
%     \centering
%     \savebox{\mybox}{
%         \hspace{-0.3cm}
%         \includegraphics[width=.48\linewidth]{Figure/fig2_iclr_a.pdf}}%
%   \begin{subfigure}[b]{0.48\textwidth}
%     \centering
%     \usebox{\mybox}
%     \caption{}
%     \label{fig2a}
%   \end{subfigure}
%   \quad
%   \begin{subfigure}[b]{0.48\textwidth}
%     \centering
%     % Adjust vertical height of smaller image
%     \raisebox{\dimexpr.5\ht\mybox-.5\height}{%
%         \centering
%         \begin{tabular}[b]{c @{\extracolsep{\fill}} ccc}
%             % \centering
%             % \caption{}
%             \multicolumn{1}{c}{}
%             & \multicolumn{1}{c}{Clean} 
%             & \multicolumn{1}{c}{AA} 
%             % \\ \hline
            
%             \\ \midrule[.1em]
%             \multicolumn{1}{c}{\citet{unlabeled_adv}}
%             & \multicolumn{1}{c}{89.69}   
%             & \multicolumn{1}{c}{60.09}
%             % \\ \cline{1-3}
            
%             \\
            
%             \multicolumn{1}{c}{OAT+\citet{unlabeled_adv}}
%             & \multicolumn{1}{c}{89.48}
%             & \multicolumn{1}{c}{60.24}
%             % \\ \cline{1-3}
%             \\ \bottomrule
%         \end{tabular}}
%     \caption{}
%     \label{fig2b}
%   \end{subfigure}
%     \caption{(a)~The effectiveness of OAT with increasing pseudo-labeled data size under PGD100 and (b)~the results of the state-of-the-art model improved by OAT under PG\textsubscript{TRADES}, PG\textsubscript{RST}~\citep{unlabeled_adv}, and AA on CIFAR-10.}\label{figure2}
% \end{figure}

Surprisingly, Figure~\ref{figure2}(a) shows that OAT still improves robust generalization even when many pseudo-labeled data are used.
OAT is also combined with RST~\citep{unlabeled_adv}, which has recently recorded a state-of-the-art adversarial robustness using UID data; in fact, Figure~\ref{figure2}(b) demonstrates that OAT can further improve the state-of-the-art adversarial training method.
This is presumably because the additional data include noisy labeled data,
in which induce memorization in the learning process and thus impair the generalization performance~\citep{rethinking_generalization}.
OAT seems to achieve a higher level of robust generalization by effectively suppressing the effects of noisy data.
Additional details on the experiments illustrated in Figure~\ref{figure2} are described in Appendix~\ref{I}.

\subsection{Study on the effectiveness of OAT in standard learning}
\paragraph{Randomization test}
The effect of OAT is analyzed based on the randomization test~\citep{rethinking_generalization}.
The randomization test is an experiment aiming to observe the effective capacity of neural networks and the effect of regularization by training the model on a copy of the data where the true labels are replaced by random labels.
\begingroup
\begin{figure}[!htb]
\begin{center}
\includegraphics[width=1.0\linewidth]{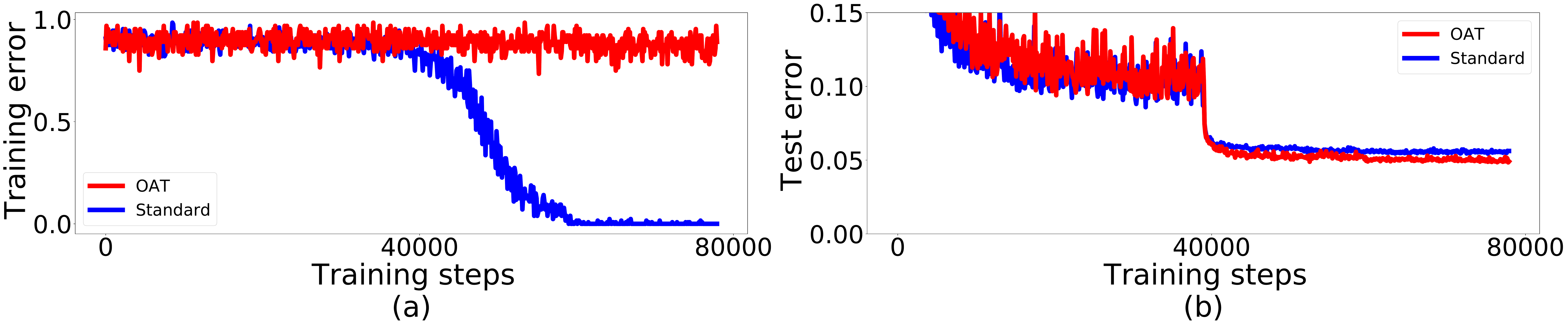}
% \subfigure[]{\includegraphics[width=0.45\textwidth]{Figure/fig2.pdf}}
% \subfigure[]{\includegraphics[width=0.45\textwidth]{Figure/fig1.pdf}}
\end{center}
\caption{(a)~The results of the randomization test and (b)~the test error curves for the Standard and OAT models along their optimization trajectories on CIFAR-10.}\label{figure1}
\end{figure}
In Figure~\ref{figure1}(a), it can be seen that the Standard model memorizes all training samples to obtain a low training error, whereas the OAT model continues to have a high training error.
These results show that OAT effectively regularizes neural networks so that it learns only features with strong correlation with the class labels.
Figure~\ref{figure1}(b) indicates that the OAT model learns more slowly than the Standard model owing to the influence of a strong regularizer in the early stages of training; however, it achieves a better generalization performance at the end of training.

\paragraph{Evaluating classification performance}
The effect of OAT on the classification accuracy is shown in Table~\ref{table3}.
When the number of training samples~($N$) is small, the influence of undesirable features is expected to be large.
Therefore, the experiments are classified depending on the number of training samples~($N$).
\begin{table*}[ht]
    \centering
    \caption{Accuracy~(\%, mean over 5 runs) comparison of the OAT model using various datasets with the baseline models. We show the improved results compared to the counterpart of each model in bold. Pseudo-label is the model trained using pseudo-labeled data~(from 80M-TI) in conjunction with the target dataset. Fusion is the model that applies $\textup{OAT}_\textup{80M-TI}$ to the Pseudo-label model.
    A detailed description can be found in Appendix~\ref{e}.}\label{table3}
    \begin{minipage}{.5\linewidth}
        %   \caption{}
        \centering
        \begin{tabular}{c @{\extracolsep{\fill}} ccc}
    %\toprule
            \multicolumn{1}{c}{Dataset} & \multicolumn{1}{c}{CIFAR10} & \multicolumn{1}{c}{CIFAR100} \\ 
            \multicolumn{1}{c}{$N$}
    % & \multicolumn{1}{c}{2,500} & \multicolumn{1}{c|}{Full} 
    % & \multicolumn{1}{c}{2,500} & \multicolumn{1}{c|}{Full} 
            & \multicolumn{1}{c}{2,500 / Full}
            & \multicolumn{1}{c}{2,500 / Full}
    % \\ \hline
            \\ \midrule[.1em]
            \multicolumn{1}{c}{Standard}    
            & \multicolumn{1}{c}{65.44 / 94.46}   
            & \multicolumn{1}{c}{24.41 / 74.87}
            \\
    % \\ \cline{1-3}
            \multicolumn{1}{c}{OAT\textsubscript{SVHN}}   
            & \multicolumn{1}{c}{\textbf{68.56} / 94.45}   
            & \multicolumn{1}{c}{\textbf{24.82} / \textbf{75.65}}
            \\
    % \\ \cline{1-3}
    %\multicolumn{1}{c}{OST\textsubscript{Crack}}   
    %& \multicolumn{1}{c}{67.93 / 94.42}   
    %& \multicolumn{1}{c}{22.94 / \textbf{75.63}}
    %\\
    % \\ \cline{1-3}
    %\multicolumn{1}{c}{OST\textsubscript{Fashion}}   
    %& \multicolumn{1}{c}{70.70 / 94.49}   
    %& \multicolumn{1}{c}{23.72 / \textbf{75.74}}
    %\\
    % \\ \cline{1-3}
            \multicolumn{1}{c}{OAT\textsubscript{Simpson}}   
            & \multicolumn{1}{c}{\textbf{70.08} / 94.43}   
            & \multicolumn{1}{c}{\textbf{27.04} / \textbf{76.03}}
            \\
    % \\ \cline{1-3}
            \multicolumn{1}{c}{OAT\textsubscript{80M-TI}}   
            & \multicolumn{1}{c}{\textbf{72.49} / \textbf{95.20}}   
            & \multicolumn{1}{c}{\textbf{26.13} / \textbf{76.30}}
    \\
    % \\ \cline{1-3}
            \multicolumn{1}{c}{Pseudo-label}    
            & \multicolumn{1}{c}{\quad-\quad/ 95.28}   
            & \multicolumn{1}{c}{\quad-\quad/ 77.24}
            \\
    % \\ \cline{1-3}
    %\multicolumn{1}{c}{OST\textsubscript{SVHN}}    
    %& \multicolumn{1}{c}{- / 94.45}   
    %& \multicolumn{1}{c}{- / \textbf{75.65}}
            \multicolumn{1}{c}{Fusion}    
            & \multicolumn{1}{c}{\quad-\quad/ \textbf{95.53}}   
            & \multicolumn{1}{c}{\quad-\quad/ \textbf{77.36}}
            \\ \cline{1-3}
    %\\ \bottomrule
        \end{tabular}
    \end{minipage}%
    \begin{minipage}{.5\linewidth}
        \centering
            % \caption{}
        \begin{tabular}{c @{\extracolsep{\fill}} ccc}
            %\toprule
            \multicolumn{1}{c}{Dataset} & \multicolumn{1}{c}{\begin{tabular}[c]{@{}c@{}}ImgNet10\\ (64 x 64)\end{tabular} } & \multicolumn{1}{c}{\begin{tabular}[c]{@{}c@{}}ImgNet10\\ (160 x 160)\end{tabular}} \\ 
            \multicolumn{1}{c}{$N$}
            & \multicolumn{1}{c}{100 / Full} 
            & \multicolumn{1}{c}{100 / Full} 
            % \\ \hline
            \\ \midrule[.1em]
            \multicolumn{1}{c}{Standard}    
            & \multicolumn{1}{c}{37.90 / 86.93}   
            & \multicolumn{1}{c}{33.36 / 90.91}
            % \\ \cline{1-3}
            \\
            \multicolumn{1}{c}{OAT\textsubscript{VisDA17}}    
            & \multicolumn{1}{c}{36.21 / 86.71}
            & \multicolumn{1}{c}{\textbf{35.93} / \textbf{91.23}}
            % \\ \cline{1-3}
            
            \\
            \multicolumn{1}{c}{OAT\textsubscript{Places365}}    
            & \multicolumn{1}{c}{\textbf{41.84} / \textbf{88.37}}
            & \multicolumn{1}{c}{\textbf{40.11} / \textbf{91.42}}
            \\
            \multicolumn{1}{c}{OAT\textsubscript{ImgNet990}}    
            & \multicolumn{1}{c}{\textbf{42.18} / \textbf{87.88}}
            & \multicolumn{1}{c}{\textbf{40.41} / \textbf{91.87}}
            % \\ \cline{1-3}
            \\ \cline{1-3}
            %\\ \bottomrule
        \end{tabular}
    \end{minipage} 
\end{table*}
Table~\ref{table3} indicates that the effect of OAT is large when the amount of training data is small, as predicted.
Additionally, OAT enhances the generalization performance using 80M-TI, ImgNet990, and Places365, which have input distributions similar to the target datasets, more than when using other OOD datasets.
This proves empirically that in our theoretical analysis, the OOD data, following the same undesirable feature distribution as the target data, can improve generalization through OAT. 
In addition, the results of Pseudo-label and Fusion models show that even when pseudo-labeled data are available, OOD data can be leveraged to further improve the standard generalization performance.
% Moreover, the proposed method, which does not require complex operations and is very simple to implement, can lead to higher performance by implementing existing data augmentation methods.
Moreover, the proposed method, which does not require complex operations and is very simple to implement, can lead to higher performance by combining with existing data augmentation methods.
As an example, the effectiveness of Mixup~\citep{mixup} is enhanced by applying OAT; the experimental results are provided in Appendix~\ref{J}.

Finally, OAT in a standard learning scheme using the entire target dataset is generally less effective than OAT in an adversarial training scheme~(see Appendix~\ref{G} for more details).
Therefore, it can be inferred that the transferability of undesirable features is greater in adversarial settings than in standard settings.
In other words, these results experimentally show that the number of training samples required for robust generalization is large compared to that required for standard generalization.

\section{Conclusions and future directions}
In this study, a method is proposed to compensate for the insufficient training data by using OOD data, which are less restrictive than UID data.
It is theoretically demonstrated that training with OOD data can remove undesirable feature contributions in a simple Gaussian model.
Experiments are performed on various OOD datasets, which surprisingly demonstrate that even OOD datasets that apparently have little correlation with the target dataset from the human perspective can help standard and robust generalization through the proposed method.
These results imply that a common undesirable feature space exists among diverse datasets.
In addition, the effectiveness of the proposed method is evaluated when extra UID data are available, and the results indicate that OAT can improve the generalization performance even when substantial pseudo-labeled data are used.

Nevertheless, some limitations need to be acknowledged.
First, it is challenging to predict the effectiveness of the proposed method before applying it to a specific target-OOD dataset pair.
Second, our method is less effective against strong targeted adversarial attacks, such as CW attacks, as it is difficult to generate deliberate adversarial attacks on in-distribution in the process of OAT.
Therefore, as a future research direction, we aim to quantify the degree to which undesirable features are shared between the target and OOD datasets and construct strong adversarial attacks using OOD data.

\paragraph{Acknowledgements:}
This work was supported by the National Research Foundation of Korea~(NRF) grant funded
by the Korea government~(Ministry of Science and ICT)
[2018R1A2B3001628], the BK21
FOUR program of the Education and Research Program for Future ICT Pioneers, Seoul National
University in 2020,
and AIR Lab~(AI Research Lab) in Hyundai \& Kia Motor Company through HKMC-SNU AI Consortium Fund.

\bibliography{iclr2021_conference}

\begin{thebibliography}{50}
\providecommand{\natexlab}[1]{#1}
\providecommand{\url}[1]{\texttt{#1}}
\expandafter\ifx\csname urlstyle\endcsname\relax
  \providecommand{\doi}[1]{doi: #1}\else
  \providecommand{\doi}{doi: \begingroup \urlstyle{rm}\Url}\fi

\bibitem[Aggarwal(2018)]{fashion_dataset}
Param Aggarwal.
\newblock Fashion product images (small), 2018.
\newblock data retrieved from Kaggle, \url{
  https://www.kaggle.com/paramaggarwal/fashion-product-images-small}.

\bibitem[Andriushchenko et~al.(2019)Andriushchenko, Croce, Flammarion, and
  Hein]{square_attack}
Maksym Andriushchenko, Francesco Croce, Nicolas Flammarion, and Matthias Hein.
\newblock Square attack: a query-efficient black-box adversarial attack via
  random search.
\newblock \emph{arXiv preprint arXiv:1912.00049}, 2019.

\bibitem[Athalye et~al.(2018)Athalye, Carlini, and
  Wagner]{obfuscated_adversarial}
Anish Athalye, Nicholas Carlini, and David Wagner.
\newblock Obfuscated gradients give a false sense of security: Circumventing
  defenses to adversarial examples.
\newblock \emph{arXiv preprint arXiv:1802.00420}, 2018.

\bibitem[Attia(2018)]{simpson_dataset}
Alexandre Attia.
\newblock The simpsons characters data, 2018.
\newblock data retrieved from Kaggle, \url{
  https://www.kaggle.com/alexattia/the-simpsons-characters-dataset}.

\bibitem[Augustin et~al.(2020)Augustin, Meinke, and Hein]{ratio}
Maximilian Augustin, Alexander Meinke, and Matthias Hein.
\newblock Adversarial robustness on in-and out-distribution improves
  explainability.
\newblock In \emph{European Conference on Computer Vision}, pp.\  228--245.
  Springer, 2020.

\bibitem[Bahng et~al.(2019)Bahng, Chun, Yun, Choo, and Oh]{rebias}
Hyojin Bahng, Sanghyuk Chun, Sangdoo Yun, Jaegul Choo, and Seong~Joon Oh.
\newblock Learning de-biased representations with biased representations.
\newblock \emph{arXiv preprint arXiv:1910.02806}, 2019.

\bibitem[Bellman(1961)]{curse_of_dimension}
Robert Bellman.
\newblock Curse of dimensionality.
\newblock \emph{Adaptive control processes: a guided tour. Princeton, NJ},
  3:\penalty0 2, 1961.

\bibitem[Ben-Tal et~al.(2013)Ben-Tal, Den~Hertog, De~Waegenaere, Melenberg, and
  Rennen]{distributionally_robust_learning}
Aharon Ben-Tal, Dick Den~Hertog, Anja De~Waegenaere, Bertrand Melenberg, and
  Gijs Rennen.
\newblock Robust solutions of optimization problems affected by uncertain
  probabilities.
\newblock \emph{Management Science}, 59\penalty0 (2):\penalty0 341--357, 2013.

\bibitem[Carlini \& Wagner(2017)Carlini and Wagner]{cw_attack}
Nicholas Carlini and David Wagner.
\newblock Towards evaluating the robustness of neural networks.
\newblock In \emph{2017 IEEE Symposium on Security and Privacy (SP)}, pp.\
  39--57. IEEE, 2017.

\bibitem[Carmon et~al.(2019)Carmon, Raghunathan, Schmidt, Duchi, and
  Liang]{unlabeled_adv}
Yair Carmon, Aditi Raghunathan, Ludwig Schmidt, John~C Duchi, and Percy~S
  Liang.
\newblock Unlabeled data improves adversarial robustness.
\newblock In \emph{Advances in Neural Information Processing Systems}, pp.\
  11190--11201, 2019.

\bibitem[Chan et~al.(2020)Chan, Tay, and Ong]{Alvin}
Alvin Chan, Yi~Tay, and Yew-Soon Ong.
\newblock What it thinks is important is important: Robustness transfers
  through input gradients.
\newblock In \emph{Proceedings of the IEEE/CVF Conference on Computer Vision
  and Pattern Recognition (CVPR)}, June 2020.

\bibitem[Croce \& Hein(2019)Croce and Hein]{fab_attack}
Francesco Croce and Matthias Hein.
\newblock Minimally distorted adversarial examples with a fast adaptive
  boundary attack.
\newblock \emph{arXiv preprint arXiv:1907.02044}, 2019.

\bibitem[Croce \& Hein(2020)Croce and Hein]{auto_attack}
Francesco Croce and Matthias Hein.
\newblock Reliable evaluation of adversarial robustness with an ensemble of
  diverse parameter-free attacks.
\newblock \emph{arXiv preprint arXiv:2003.01690}, 2020.

\bibitem[Dai et~al.(2017)Dai, Yang, Yang, Cohen, and
  Salakhutdinov]{dai2017good}
Zihang Dai, Zhilin Yang, Fan Yang, William~W Cohen, and Russ~R Salakhutdinov.
\newblock Good semi-supervised learning that requires a bad gan.
\newblock In \emph{Advances in neural information processing systems}, pp.\
  6510--6520, 2017.

\bibitem[Deng et~al.(2009)Deng, Dong, Socher, Li, Li, and
  Fei-Fei]{imagenet_dataset}
J.~Deng, W.~Dong, R.~Socher, L.-J. Li, K.~Li, and L.~Fei-Fei.
\newblock {ImageNet: A Large-Scale Hierarchical Image Database}.
\newblock In \emph{CVPR09}, 2009.

\bibitem[Geirhos et~al.(2018)Geirhos, Rubisch, Michaelis, Bethge, Wichmann, and
  Brendel]{cnntexturebias}
Robert Geirhos, Patricia Rubisch, Claudio Michaelis, Matthias Bethge, Felix~A
  Wichmann, and Wieland Brendel.
\newblock Imagenet-trained cnns are biased towards texture; increasing shape
  bias improves accuracy and robustness.
\newblock \emph{arXiv preprint arXiv:1811.12231}, 2018.

\bibitem[Goodfellow et~al.(2014)Goodfellow, Shlens, and Szegedy]{fgsm_attack}
Ian~J Goodfellow, Jonathon Shlens, and Christian Szegedy.
\newblock Explaining and harnessing adversarial examples.
\newblock \emph{arXiv preprint arXiv:1412.6572}, 2014.

\bibitem[He et~al.(2016)He, Zhang, Ren, and Sun]{res}
Kaiming He, Xiangyu Zhang, Shaoqing Ren, and Jian Sun.
\newblock Identity mappings in deep residual networks.
\newblock In \emph{European conference on computer vision}, pp.\  630--645.
  Springer, 2016.

\bibitem[Hein et~al.(2019)Hein, Andriushchenko, and Bitterwolf]{acet}
Matthias Hein, Maksym Andriushchenko, and Julian Bitterwolf.
\newblock Why relu networks yield high-confidence predictions far away from the
  training data and how to mitigate the problem.
\newblock In \emph{Proceedings of the IEEE/CVF Conference on Computer Vision
  and Pattern Recognition}, pp.\  41--50, 2019.

\bibitem[Hendrycks et~al.(2018)Hendrycks, Mazeika, and Dietterich]{outlier}
Dan Hendrycks, Mantas Mazeika, and Thomas Dietterich.
\newblock Deep anomaly detection with outlier exposure.
\newblock \emph{arXiv preprint arXiv:1812.04606}, 2018.

\bibitem[Hendrycks et~al.(2019)Hendrycks, Lee, and Mazeika]{pretrain_hendrycks}
Dan Hendrycks, Kimin Lee, and Mantas Mazeika.
\newblock Using pre-training can improve model robustness and uncertainty.
\newblock \emph{arXiv preprint arXiv:1901.09960}, 2019.

\bibitem[Hinton et~al.(2012)Hinton, Deng, Yu, Dahl, Mohamed, Jaitly, Senior,
  Vanhoucke, Nguyen, Kingsbury, et~al.]{nlp_2012}
Geoffrey Hinton, Li~Deng, Dong Yu, George Dahl, Abdel-rahman Mohamed, Navdeep
  Jaitly, Andrew Senior, Vincent Vanhoucke, Patrick Nguyen, Brian Kingsbury,
  et~al.
\newblock Deep neural networks for acoustic modeling in speech recognition.
\newblock \emph{IEEE Signal processing magazine}, 29, 2012.

\bibitem[Howard()]{imagenette}
Jeremy Howard.
\newblock Imagenette.
\newblock URL \url{https://github.com/fastai/imagenette/}.

\bibitem[Kannan et~al.(2018)Kannan, Kurakin, and Goodfellow]{alp_defense}
Harini Kannan, Alexey Kurakin, and Ian Goodfellow.
\newblock Adversarial logit pairing.
\newblock \emph{arXiv preprint arXiv:1803.06373}, 2018.

\bibitem[Krizhevsky et~al.(2009)Krizhevsky, Hinton, et~al.]{cifar_dataset}
Alex Krizhevsky, Geoffrey Hinton, et~al.
\newblock Learning multiple layers of features from tiny images.
\newblock Technical report, Citeseer, 2009.

\bibitem[Krizhevsky et~al.(2012)Krizhevsky, Sutskever, and
  Hinton]{imagenet_2012}
Alex Krizhevsky, Ilya Sutskever, and Geoffrey~E Hinton.
\newblock Imagenet classification with deep convolutional neural networks.
\newblock In \emph{Advances in neural information processing systems}, pp.\
  1097--1105, 2012.

\bibitem[Lee et~al.(2017)Lee, Lee, Lee, and Shin]{uniformlabel}
Kimin Lee, Honglak Lee, Kibok Lee, and Jinwoo Shin.
\newblock Training confidence-calibrated classifiers for detecting
  out-of-distribution samples.
\newblock \emph{arXiv preprint arXiv:1711.09325}, 2017.

\bibitem[Lee et~al.(2020)Lee, Lee, and Yoon]{avmixup}
Saehyung Lee, Hyungyu Lee, and Sungroh Yoon.
\newblock Adversarial vertex mixup: Toward better adversarially robust
  generalization.
\newblock In \emph{Proceedings of the IEEE/CVF Conference on Computer Vision
  and Pattern Recognition (CVPR)}, June 2020.

\bibitem[Madry et~al.(2017)Madry, Makelov, Schmidt, Tsipras, and
  Vladu]{pgd_attack}
Aleksander Madry, Aleksandar Makelov, Ludwig Schmidt, Dimitris Tsipras, and
  Adrian Vladu.
\newblock Towards deep learning models resistant to adversarial attacks.
\newblock \emph{arXiv preprint arXiv:1706.06083}, 2017.

\bibitem[Moosavi-Dezfooli et~al.(2017)Moosavi-Dezfooli, Fawzi, Fawzi, and
  Frossard]{universal_adv}
Seyed-Mohsen Moosavi-Dezfooli, Alhussein Fawzi, Omar Fawzi, and Pascal
  Frossard.
\newblock Universal adversarial perturbations.
\newblock In \emph{Proceedings of the IEEE conference on computer vision and
  pattern recognition}, pp.\  1765--1773, 2017.

\bibitem[Najafi et~al.(2019)Najafi, Maeda, Koyama, and Miyato]{incomplete_adv}
Amir Najafi, Shin-ichi Maeda, Masanori Koyama, and Takeru Miyato.
\newblock Robustness to adversarial perturbations in learning from incomplete
  data.
\newblock In \emph{Advances in Neural Information Processing Systems}, pp.\
  5542--5552, 2019.

\bibitem[Naseer et~al.(2019)Naseer, Khan, Khan, Khan, and
  Porikli]{transferability_adv}
Muhammad~Muzammal Naseer, Salman~H Khan, Muhammad~Haris Khan, Fahad~Shahbaz
  Khan, and Fatih Porikli.
\newblock Cross-domain transferability of adversarial perturbations.
\newblock In \emph{Advances in Neural Information Processing Systems}, pp.\
  12885--12895, 2019.

\bibitem[Netzer et~al.(2011)Netzer, Wang, Coates, Bissacco, Wu, and
  Ng]{svhn_dataset}
Yuval Netzer, Tao Wang, Adam Coates, Alessandro Bissacco, Bo~Wu, and Andrew~Y
  Ng.
\newblock Reading digits in natural images with unsupervised feature learning.
\newblock 2011.

\bibitem[Peng et~al.(2017)Peng, Usman, Kaushik, Hoffman, Wang, and
  Saenko]{visda2017}
Xingchao Peng, Ben Usman, Neela Kaushik, Judy Hoffman, Dequan Wang, and Kate
  Saenko.
\newblock Visda: The visual domain adaptation challenge, 2017.

\bibitem[Poursaeed et~al.(2018)Poursaeed, Katsman, Gao, and Belongie]{gan_adv}
Omid Poursaeed, Isay Katsman, Bicheng Gao, and Serge Belongie.
\newblock Generative adversarial perturbations.
\newblock In \emph{Proceedings of the IEEE Conference on Computer Vision and
  Pattern Recognition}, pp.\  4422--4431, 2018.

\bibitem[Schmidt et~al.(2018)Schmidt, Santurkar, Tsipras, Talwar, and
  Madry]{more_data_adversarial}
Ludwig Schmidt, Shibani Santurkar, Dimitris Tsipras, Kunal Talwar, and
  Aleksander Madry.
\newblock Adversarially robust generalization requires more data.
\newblock In \emph{Advances in Neural Information Processing Systems}, pp.\
  5014--5026, 2018.

\bibitem[Snodgrass \& Vanderwart(1980)Snodgrass and
  Vanderwart]{visual_complexity}
Joan~G Snodgrass and Mary Vanderwart.
\newblock A standardized set of 260 pictures: norms for name agreement, image
  agreement, familiarity, and visual complexity.
\newblock \emph{Journal of experimental psychology: Human learning and memory},
  6\penalty0 (2):\penalty0 174, 1980.

\bibitem[Stanforth et~al.(2019)Stanforth, Fawzi, Kohli, et~al.]{are_labels_adv}
Robert Stanforth, Alhussein Fawzi, Pushmeet Kohli, et~al.
\newblock Are labels required for improving adversarial robustness?
\newblock \emph{arXiv preprint arXiv:1905.13725}, 2019.

\bibitem[Szegedy et~al.(2013)Szegedy, Zaremba, Sutskever, Bruna, Erhan,
  Goodfellow, and Fergus]{adversarial_example}
Christian Szegedy, Wojciech Zaremba, Ilya Sutskever, Joan Bruna, Dumitru Erhan,
  Ian Goodfellow, and Rob Fergus.
\newblock Intriguing properties of neural networks.
\newblock \emph{arXiv preprint arXiv:1312.6199}, 2013.

\bibitem[Torralba et~al.(2008)Torralba, Fergus, and Freeman]{80mti}
Antonio Torralba, Rob Fergus, and William~T Freeman.
\newblock 80 million tiny images: A large data set for nonparametric object and
  scene recognition.
\newblock \emph{IEEE transactions on pattern analysis and machine
  intelligence}, 30\penalty0 (11):\penalty0 1958--1970, 2008.

\bibitem[Tsipras et~al.(2018)Tsipras, Santurkar, Engstrom, Turner, and
  Madry]{odds_with_accuracy}
Dimitris Tsipras, Shibani Santurkar, Logan Engstrom, Alexander Turner, and
  Aleksander Madry.
\newblock Robustness may be at odds with accuracy.
\newblock \emph{arXiv preprint arXiv:1805.12152}, 2018.

\bibitem[Vapnik \& Vapnik(1998)Vapnik and Vapnik]{vapnik}
Vladimir Vapnik and Vlamimir Vapnik.
\newblock Statistical learning theory wiley.
\newblock \emph{New York}, 1, 1998.

\bibitem[Wang et~al.(2020)Wang, Wu, Huang, and Xing]{highfreqency}
Haohan Wang, Xindi Wu, Zeyi Huang, and Eric~P. Xing.
\newblock High-frequency component helps explain the generalization of
  convolutional neural networks.
\newblock In \emph{Proceedings of the IEEE/CVF Conference on Computer Vision
  and Pattern Recognition (CVPR)}, June 2020.

\bibitem[Xie et~al.(2019)Xie, Hovy, Luong, and Le]{selftraining}
Qizhe Xie, Eduard Hovy, Minh-Thang Luong, and Quoc~V Le.
\newblock Self-training with noisy student improves imagenet classification.
\newblock \emph{arXiv preprint arXiv:1911.04252}, 2019.

\bibitem[Yun et~al.(2019)Yun, Han, Oh, Chun, Choe, and Yoo]{cutmix}
Sangdoo Yun, Dongyoon Han, Seong~Joon Oh, Sanghyuk Chun, Junsuk Choe, and
  Youngjoon Yoo.
\newblock Cutmix: Regularization strategy to train strong classifiers with
  localizable features.
\newblock In \emph{Proceedings of the IEEE International Conference on Computer
  Vision}, pp.\  6023--6032, 2019.

\bibitem[Zagoruyko \& Komodakis(2016)Zagoruyko and Komodakis]{wide_resnet}
Sergey Zagoruyko and Nikos Komodakis.
\newblock Wide residual networks.
\newblock \emph{arXiv preprint arXiv:1605.07146}, 2016.

\bibitem[Zhang et~al.(2016)Zhang, Bengio, Hardt, Recht, and
  Vinyals]{rethinking_generalization}
Chiyuan Zhang, Samy Bengio, Moritz Hardt, Benjamin Recht, and Oriol Vinyals.
\newblock Understanding deep learning requires rethinking generalization.
\newblock \emph{arXiv preprint arXiv:1611.03530}, 2016.

\bibitem[Zhang et~al.(2019)Zhang, Yu, Jiao, Xing, Ghaoui, and
  Jordan]{trades_defense}
Hongyang Zhang, Yaodong Yu, Jiantao Jiao, Eric Xing, Laurent~El Ghaoui, and
  Michael Jordan.
\newblock Theoretically principled trade-off between robustness and accuracy.
\newblock In Kamalika Chaudhuri and Ruslan Salakhutdinov (eds.),
  \emph{Proceedings of the 36th International Conference on Machine Learning},
  volume~97 of \emph{Proceedings of Machine Learning Research}, pp.\
  7472--7482, Long Beach, California, USA, 09--15 Jun 2019. PMLR.
\newblock URL \url{http://proceedings.mlr.press/v97/zhang19p.html}.
\newblock \url{https://github.com/yaodongyu/TRADES}.

\bibitem[Zhang et~al.(2017)Zhang, Cisse, Dauphin, and Lopez-Paz]{mixup}
Hongyi Zhang, Moustapha Cisse, Yann~N Dauphin, and David Lopez-Paz.
\newblock mixup: Beyond empirical risk minimization.
\newblock \emph{arXiv preprint arXiv:1710.09412}, 2017.

\bibitem[Zhou et~al.(2017)Zhou, Lapedriza, Khosla, Oliva, and
  Torralba]{places365}
Bolei Zhou, Agata Lapedriza, Aditya Khosla, Aude Oliva, and Antonio Torralba.
\newblock Places: A 10 million image database for scene recognition.
\newblock \emph{IEEE Transactions on Pattern Analysis and Machine
  Intelligence}, 2017.

\end{thebibliography}
\bibliographystyle{iclr2021_conference}

%\newpage
\appendix

\section{Proofs}\label{A}
\newtheorem{theoremm}{Theorem}
\begin{theoremm} \label{theoremm1}
    Let $\mathnormal{t}\in[0,1]$ be the given target value of the feature vector~$\bm{\mathnormal{z}}$ in our classification model, and $\lambda$ be a non-negative constant.
    Then, when $\mathnormal{t}=0.5$, the expectation of the adversarial feature vector is
    \begin{equation}\label{eq10}
        \mathbb{E}_{\bm{\mathnormal{z}}}\left[\Bar{z}_1\right]=\mathbb{E}_{\bm{\mathnormal{z}}}\left[z_1\right],
        \quad
        \mathbb{E}_{\bm{\mathnormal{z}}}\left[\Bar{z}_k\right]\approx\mathbb{E}_{\bm{\mathnormal{z}}}\left[z_k\right]+\lambda\cdot\mathnormal{q}, 
        \quad
        \textup{where}\;k\in\{2,\dots,d+1\}.
    \end{equation}
\end{theoremm}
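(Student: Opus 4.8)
The plan is to treat the adversary as performing loss-maximizing gradient ascent in the feature space. This is the natural reading given that $\Phi$ is fixed and that, as emphasized in the setup, the input space of the classifier is identified with the output space of $\Phi$; the $\ell_\infty$ budget therefore acts directly on the coordinates of $\bm{z}$, with an effective per-coordinate magnitude that I will call $\lambda\ge 0$ (which the statement already treats as a given constant, so it need not be derived from $\epsilon$). First I would reduce everything to the scalar $s=\bm{w}^\top\bm{z}=\tfrac{1}{d}\sum_{k=2}^{d+1}z_k$, using $w_1=0$. A one-line differentiation of the cross-entropy loss gives $\partial\mathcal{L}/\partial s=\sigma(s)-t$, and hence by the chain rule $\partial\mathcal{L}/\partial z_k=(\sigma(s)-t)\,w_k$. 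Specializing to $t=0.5$ yields $\partial\mathcal{L}/\partial z_k=(\sigma(s)-0.5)\,w_k$.

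Next I would handle the two blocks of coordinates separately. For the robust feature, $w_1=0$ forces $\partial\mathcal{L}/\partial z_1=0$, so the loss is insensitive to $z_1$ and the optimal perturbation leaves it untouched; thus $\bar z_1=z_1$ and $\mathbb{E}_{\bm{z}}[\bar z_1]=\mathbb{E}_{\bm{z}}[z_1]$ holds exactly, giving the first equality. For a non-robust coordinate $k\ge 2$ I would observe that the composite loss $g(s)=-\tfrac12\ln\sigma(s)-\tfrac12\ln(1-\sigma(s))$ is convex and symmetric about its minimizer $s=0$, and $s$ is linear in $\bm{z}$; a short argument then shows that over the $\ell_\infty$ ball the sign attack is exactly optimal, so $\bar z_k=z_k+\lambda\,\textup{sign}(\partial\mathcal{L}/\partial z_k)=z_k+\lambda\,\textup{sign}(s)$, where the last step uses $\textup{sign}((\sigma(s)-0.5)w_k)=\textup{sign}(s)$ since $\sigma$ is increasing through $\sigma(0)=0.5$ and $w_k=1/d>0$.

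The key step, and the one I expect to be the main obstacle, is replacing $\textup{sign}(s)$ by $q$, which is precisely what the approximation symbol in the statement absorbs. Conditioned on $q$, each $z_k$ has mean $\eta q$ and unit variance, so $s=\tfrac{1}{d}\sum_{k=2}^{d+1}z_k$ concentrates at $\eta q$ with fluctuations of order $1/\sqrt{d}$. Since $\eta\ge 0$ is assumed large, a Gaussian tail (or Chebyshev) estimate gives $\textup{sign}(s)=\textup{sign}(\eta q)=q$ on an event of probability approaching one as $d$ grows. On that event $\bar z_k=z_k+\lambda q$, so taking expectations yields $\mathbb{E}_{\bm{z}}[\bar z_k]\approx\mathbb{E}_{\bm{z}}[z_k]+\lambda q$, with the residual governed by the small probability that $s$ carries the wrong sign. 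I would close by noting this residual is negligible in the large-$d$, large-$\eta$ regime the model is designed for, which is exactly the regime the informal discussion after the theorem invokes.
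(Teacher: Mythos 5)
Your proof is correct and follows essentially the same route as the paper's: reduce to a sign-gradient (FGSM-style) perturbation of magnitude $\lambda$ in feature space, use $w_1=0$ to leave $z_1$ unchanged, and replace $\textup{sign}(\sigma(\bm{w}^\top\bm{z})-\tfrac12)=\textup{sign}(s)$ by $q$ for the non-robust coordinates. The only differences are that the paper first linearizes $\Phi$ to justify working with a feature-space perturbation (a step you bypass by taking $\lambda$ as given, which the statement permits), while you are more careful than the paper at the two places it hand-waves, namely the exact optimality of the sign attack for this symmetric convex loss and the concentration argument for $\textup{sign}(s)\approx q$.
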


\begin{proof}
    Let $\bm{\delta_\mathnormal{x}}$ be the adversarial perturbation in the input space. Then,
    \begin{equation}\label{eq11}
    \begin{gathered}
        \bm{\delta_\mathnormal{x}}=\argmax_{\bm{\delta}}\mathcal{L}(\Phi(\bm{\mathnormal{x}}+\bm{\delta}), \mathnormal{t}; \bm{\mathnormal{w}})
        \approx
        \argmax_{\bm{\delta}}\mathcal{L}(\Phi(\bm{\mathnormal{x}})+\bm\delta^\top\nabla_{\bm{\mathnormal{x}}}\Phi, \mathnormal{t}; \bm{\mathnormal{w}})\\
        =
        \argmax_{\bm{\delta}}\mathcal{L}(\bm{\mathnormal{z}}+\bm\delta^\top\nabla_{\bm{\mathnormal{x}}}\bm{\mathnormal{z}}, \mathnormal{t}; \bm{\mathnormal{w}})
        =
        \argmax_{\bm{\delta}}\mathcal{L}(\bm{\mathnormal{z}}+\bm\delta_{\bm{\mathnormal{z}}}, \mathnormal{t}; \bm{\mathnormal{w}}),\quad
        \textup{where}\; \bm{\delta_\mathnormal{z}}=\bm\delta^\top\nabla_{\bm{\mathnormal{x}}}\bm{\mathnormal{z}}.
    \end{gathered}
    \end{equation}
    Equation~\ref{eq11} suggests that constructing the adversarial perturbation $\bm{\delta_\mathnormal{x}}$ in the input space can be approximated by finding the adversarial perturbation $\bm{\delta_\mathnormal{z}}$ in the feature space. These have a relationship of $\bm{\delta_\mathnormal{z}}=\bm{\delta_\mathnormal{x}}^\top\nabla_{\bm{\mathnormal{x}}}\bm{\mathnormal{z}}$ by linear approximation.
    Hence, without loss of generality, the perturbed feature vector $\Bar{\bm{\mathnormal{z}}}$ can be approximated by $\bm{\mathnormal{z}}+\lambda\cdot\textup{sign}(\nabla_{\bm{\mathnormal{z}}}\mathcal{L}(\bm{\mathnormal{z}}, \mathnormal{t}; \bm{\mathnormal{w}}))$ with an $\lambda>0$. Then,
    \begin{equation}
    \begin{gathered}
        \mathbb{E}_{\bm{\mathnormal{z}}}\left[\Bar{z}_1\right]=\mathbb{E}_{\bm{\mathnormal{z}}}\left[z_1+\lambda\cdot\textup{sign}(\nabla_{\mathnormal{z}_1}\mathcal{L}(\bm{\mathnormal{z}}, \mathnormal{t}; \bm{\mathnormal{w}}))\right]
        =\mathbb{E}_{\bm{\mathnormal{z}}}\left[z_1+\lambda\cdot\textup{sign}(\mathnormal{w}_1(\sigma(\bm{\mathnormal{w}}^\top\bm{\mathnormal{z}})-\frac{1}{2}))\right]\\
        =\mathbb{E}_{\bm{\mathnormal{z}}}\left[z_1\right]+\lambda\mathbb{E}_{\bm{\mathnormal{z}}}\left[\textup{sign}(0)\right]=\mathbb{E}_{\bm{\mathnormal{z}}}\left[z_1\right].
    \end{gathered}
    \end{equation}
    Because our classification model was trained to minimize the expected standard loss, $\mathbb{E}_{\bm{\mathnormal{z}}}\left[\textup{sign}(\mathnormal{w}_k(\sigma(\bm{\mathnormal{w}}^\top\bm{\mathnormal{z}})-\frac{1}{2}))\right]$ can be approximated by $q$. Then,
    \begin{equation}
    \begin{gathered}
        \mathbb{E}_{\bm{\mathnormal{z}}}\left[\Bar{z}_k\right]=\mathbb{E}_{\bm{\mathnormal{z}}}\left[z_k+\lambda\cdot\textup{sign}(\nabla_{\mathnormal{z}_k}\mathcal{L}(\bm{\mathnormal{z}}, \mathnormal{t}; \bm{\mathnormal{w}}))\right]
        =\mathbb{E}_{\bm{\mathnormal{z}}}\left[z_k+\lambda\cdot\textup{sign}(\mathnormal{w}_k(\sigma(\bm{\mathnormal{w}}^\top\bm{\mathnormal{z}})-\frac{1}{2}))\right]\\
        \approx\mathbb{E}_{\bm{\mathnormal{z}}}\left[z_k\right]+\lambda\cdot\mathnormal{q}
        .
    \end{gathered}
    \end{equation}
\end{proof}

\begin{theoremm}\label{theoremm2}
    When $\mathnormal{t}=0.5$, the expected gradient of the loss function~$\mathcal{L}(\bar{\bm{\mathnormal{z}}}, \mathnormal{t}; \bm{\mathnormal{w}})$ with respect to the weight vector~$\bm{\mathnormal{w}}$ of our classification model is
    \begin{equation}\label{eq14}
        \mathbb{E}_{\bar{\bm{\mathnormal{z}}}}\left[\frac{\partial\mathcal{L}}{\partial\mathnormal{w}_1}\right]\approx0,
        \quad
        \mathbb{E}_{\bar{\bm{\mathnormal{z}}}}\left[\frac{\partial\mathcal{L}}{\partial\mathnormal{w}_k}\right]\approx\frac{1}{2}(\eta+\lambda),
        \quad
        \textup{where}\;k\in\{2,\dots,d+1\}.
    \end{equation}
\end{theoremm}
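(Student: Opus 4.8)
The plan is to reduce the statement to an expectation of the standard logistic-regression gradient and then evaluate it coordinate-wise using Theorem~\ref{theoremm1}. For the soft target $\mathnormal{t}$, the cross-entropy loss is $\mathcal{L}(\bar{\bm{\mathnormal{z}}},\mathnormal{t};\bm{\mathnormal{w}})=-\mathnormal{t}\ln\sigma(\bm{\mathnormal{w}}^\top\bar{\bm{\mathnormal{z}}})-(1-\mathnormal{t})\ln(1-\sigma(\bm{\mathnormal{w}}^\top\bar{\bm{\mathnormal{z}}}))$, and a short computation using $\sigma'=\sigma(1-\sigma)$ collapses its derivative to the familiar form $\partial\mathcal{L}/\partial\mathnormal{w}_j=(\sigma(\bm{\mathnormal{w}}^\top\bar{\bm{\mathnormal{z}}})-\mathnormal{t})\,\bar{z}_j$. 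Setting $\mathnormal{t}=0.5$, the theorem reduces entirely to evaluating $\mathbb{E}_{\bar{\bm{\mathnormal{z}}}}[(\sigma(\bm{\mathnormal{w}}^\top\bar{\bm{\mathnormal{z}}})-\tfrac{1}{2})\,\bar{z}_j]$ for $j=1$ and for $j=k\in\{2,\dots,d+1\}$.

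For the robust coordinate $j=1$, I would exploit that $\mathnormal{w}_1=0$. This has two effects. First, the logit $\bm{\mathnormal{w}}^\top\bar{\bm{\mathnormal{z}}}=\tfrac{1}{d}\sum_{k=2}^{d+1}\bar{z}_k$ does not involve $\bar{z}_1$. Second, in the proof of Theorem~\ref{theoremm1} the sign-perturbation on the first coordinate carries the factor $\mathnormal{w}_1=0$ and hence vanishes, so $\bar{z}_1=z_1\sim\mathcal{N}(0,\mathnormal{v}^2)$ with zero mean and is independent of $\mathnormal{q}$ and of $z_2,\dots,z_{d+1}$. Since $\sigma(\bm{\mathnormal{w}}^\top\bar{\bm{\mathnormal{z}}})-\tfrac{1}{2}$ depends only on the latter coordinates, the expectation factorizes into $\mathbb{E}[\sigma(\bm{\mathnormal{w}}^\top\bar{\bm{\mathnormal{z}}})-\tfrac{1}{2}]\cdot\mathbb{E}[z_1]=0$, yielding the first claim.

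For the non-robust coordinates, the idea is to approximate the post-sigmoid residual by a saturated value. By Theorem~\ref{theoremm1}, $\bar{z}_k\approx z_k+\lambda\mathnormal{q}$, so the logit equals $\tfrac{1}{d}\sum_{k=2}^{d+1}z_k+\lambda\mathnormal{q}$; the coordinate average concentrates on its conditional mean $\eta\mathnormal{q}$ (fluctuation of order $1/\sqrt{d}$), so the logit concentrates on $(\eta+\lambda)\mathnormal{q}$. Under the data model's assumption that $\eta$ (hence $\eta+\lambda$) is large, the sigmoid saturates and $\sigma(\bm{\mathnormal{w}}^\top\bar{\bm{\mathnormal{z}}})-\tfrac{1}{2}\approx\tfrac{1}{2}\mathnormal{q}$. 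I would then compute $\mathbb{E}[\tfrac{1}{2}\mathnormal{q}\,\bar{z}_k]=\tfrac{1}{2}(\mathbb{E}[\mathnormal{q}\,z_k]+\lambda\,\mathbb{E}[\mathnormal{q}^2])$, and since $\mathbb{E}[z_k\mid\mathnormal{q}]=\eta\mathnormal{q}$ gives $\mathbb{E}[\mathnormal{q}\,z_k]=\eta$ while $\mathnormal{q}^2=1$, this equals $\tfrac{1}{2}(\eta+\lambda)$, the second claim.

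The main obstacle is justifying the saturation step $\sigma(\bm{\mathnormal{w}}^\top\bar{\bm{\mathnormal{z}}})-\tfrac{1}{2}\approx\tfrac{1}{2}\mathnormal{q}$ cleanly, since it simultaneously invokes concentration of the coordinate average over large $d$ and saturation of the logistic function for large $\eta+\lambda$. I would keep this at the level of the ``$\approx$'' used throughout the paper, consistent with the analogous approximation already made in Theorem~\ref{theoremm1}, rather than tracking error terms; a fully rigorous treatment would instead bound $\mathbb{E}\big[\,\lvert\sigma(\bm{\mathnormal{w}}^\top\bar{\bm{\mathnormal{z}}})-\tfrac{1}{2}-\tfrac{1}{2}\mathnormal{q}\rvert\cdot\lvert\bar{z}_k\rvert\,\big]$ and show it is negligible under the stated regime.
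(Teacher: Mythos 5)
Your proposal is correct and follows essentially the same route as the paper's proof: both reduce to the residual form $(\sigma(\bm{\mathnormal{w}}^\top\bar{\bm{\mathnormal{z}}})-\tfrac{1}{2})\bar{z}_j$ and then invoke the saturation approximation $\sigma(\bm{\mathnormal{w}}^\top\bar{\bm{\mathnormal{z}}})\approx\tfrac{1}{2}(1+\mathnormal{q})$ (the paper justifies it by ``adversarial vulnerability,'' you by concentration of the coordinate average plus a large logit), after which $\mathbb{E}[\bar{z}_1]=0$ kills the first coordinate and $\mathbb{E}[\mathnormal{q}\,\bar{z}_k]=\eta+\lambda$ gives the second. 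Your handling of the $j=1$ case via $\mathnormal{w}_1=0$ and independence is a slightly cleaner version of the same step, and your explicit treatment of $\mathnormal{q}$ as random (rather than the paper's conditional shorthand) is a cosmetic rather than substantive difference.
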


\begin{proof}
    Based on the adversarial vulnerability of our classifier, $\sigma(\bm{\mathnormal{w}}^\top\bar{\bm{\mathnormal{z}}})$ can be approximated by $\frac{1}{2}(1+q)$. Therefore,
    \begin{equation}
    \begin{gathered}
        \mathbb{E}_{\bar{\bm{\mathnormal{z}}}}\left[\frac{\partial\mathcal{L}}{\partial\mathnormal{w}_1}\right]=
        \mathbb{E}_{\bar{\bm{\mathnormal{z}}}}
        \left[
        \bar{\mathnormal{z}}_1(\sigma(\bm{\mathnormal{w}}^\top\bar{\bm{\mathnormal{z}}})-\frac{1}{2})
        \right]
        =\mathbb{E}_{\bar{\bm{\mathnormal{z}}}}
        \left[
        \bar{\mathnormal{z}}_1\cdot\sigma(\bm{\mathnormal{w}}^\top\bar{\bm{\mathnormal{z}}})
        \right]
        -\frac{1}{2}\mathbb{E}_{\bar{\bm{\mathnormal{z}}}}
        \left[
        \bar{\mathnormal{z}}_1
        \right]\\
        \approx
        \frac{1}{2}\mathbb{E}_{\bar{\bm{\mathnormal{z}}}}
        \left[
        \bar{\mathnormal{z}}_1\right](1+q)
        -\frac{1}{2}\mathbb{E}_{\bar{\bm{\mathnormal{z}}}}
        \left[
        \bar{\mathnormal{z}}_1
        \right]
        =\frac{q}{2}\mathbb{E}_{\bar{\bm{\mathnormal{z}}}}\left[
        \bar{\mathnormal{z}}_1\right]=0,\\
        \mathbb{E}_{\bar{\bm{\mathnormal{z}}}}\left[\frac{\partial\mathcal{L}}{\partial\mathnormal{w}_k}\right]=
        \mathbb{E}_{\bar{\bm{\mathnormal{z}}}}
        \left[
        \bar{\mathnormal{z}}_k(\sigma(\bm{\mathnormal{w}}^\top\bar{\bm{\mathnormal{z}}})-\frac{1}{2})
        \right]
        =\mathbb{E}_{\bar{\bm{\mathnormal{z}}}}
        \left[
        \bar{\mathnormal{z}}_k\cdot\sigma(\bm{\mathnormal{w}}^\top\bar{\bm{\mathnormal{z}}})
        \right]
        -\frac{1}{2}\mathbb{E}_{\bar{\bm{\mathnormal{z}}}}
        \left[
        \bar{\mathnormal{z}}_k
        \right]\\
        \approx\frac{1}{2}\mathbb{E}_{\bar{\bm{\mathnormal{z}}}}
        \left[
        \bar{\mathnormal{z}}_k\right](1+q)
        -\frac{1}{2}\mathbb{E}_{\bar{\bm{\mathnormal{z}}}}
        \left[
        \bar{\mathnormal{z}}_k
        \right]
        =\frac{q}{2}\mathbb{E}_{\bar{\bm{\mathnormal{z}}}}\left[
        \bar{\mathnormal{z}}_k\right]=\frac{1}{2}(\eta+\lambda).
    \end{gathered}
    \end{equation}

\end{proof}

\begin{theoremm}\label{theoremm3}
    When $\mathnormal{t}=0.5$ and $\mathnormal{w}_1>0$, the expected gradient of the loss function~$\mathcal{L}(\bar{\bm{\mathnormal{z}}}, \mathnormal{t}; \bm{\mathnormal{w}})$ with respect to the weight vector~$\bm{\mathnormal{w}}$ of our classification model is  
    \begin{equation}\label{eq16} 
        \mathbb{E}_{\bar{\bm{\mathnormal{z}}}}\left[\frac{\partial\mathcal{L}}{\partial\mathnormal{w}_1}\right]\approx\frac{1}{2}\lambda,
        \quad
        \mathbb{E}_{\bar{\bm{\mathnormal{z}}}}\left[\frac{\partial\mathcal{L}}{\partial\mathnormal{w}_k}\right]\approx\frac{1}{2}(\eta+\lambda),
        \quad
        \textup{where}\;k\in\{2,\dots,d+1\}.
    \end{equation}
\end{theoremm}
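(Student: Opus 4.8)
The plan is to mirror the proof of Theorem~\ref{theoremm2} almost verbatim, since the data model, the cross-entropy gradient $\partial\mathcal{L}/\partial\mathnormal{w}_j=(\sigma(\bm{\mathnormal{w}}^\top\bar{\bm{\mathnormal{z}}})-\mathnormal{t})\bar{\mathnormal{z}}_j$, and the adversarial-vulnerability approximation $\sigma(\bm{\mathnormal{w}}^\top\bar{\bm{\mathnormal{z}}})\approx\frac{1}{2}(1+\mathnormal{q})$ are all unchanged. The single new ingredient is that $\mathnormal{w}_1>0$ alters the adversarial perturbation applied to the robust coordinate $\mathnormal{z}_1$, so the first task is to recompute $\mathbb{E}_{\bm{\mathnormal{z}}}[\bar{z}_1]$ under this hypothesis.

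First I would revisit the sign-gradient form of the perturbation used in the proof of Theorem~\ref{theoremm1}, namely $\bar{z}_1=z_1+\lambda\cdot\textup{sign}(\mathnormal{w}_1(\sigma(\bm{\mathnormal{w}}^\top\bm{\mathnormal{z}})-\frac{1}{2}))$. When $\mathnormal{w}_1=0$ this vanished; now, because $\mathnormal{w}_1>0$, the factor $\mathnormal{w}_1$ is a positive constant and $\textup{sign}(\mathnormal{w}_1(\sigma(\bm{\mathnormal{w}}^\top\bm{\mathnormal{z}})-\frac{1}{2}))=\textup{sign}(\bm{\mathnormal{w}}^\top\bm{\mathnormal{z}})$. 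I would then argue that $\bm{\mathnormal{w}}^\top\bm{\mathnormal{z}}=\mathnormal{w}_1 z_1+\sum_{k\ge2}\tfrac{1}{d}z_k$ is dominated by the non-robust sum, which concentrates around $\eta\mathnormal{q}$, so $\textup{sign}(\bm{\mathnormal{w}}^\top\bm{\mathnormal{z}})\approx\mathnormal{q}$. Since $\mathbb{E}_{\bm{\mathnormal{z}}}[z_1]=0$, this gives $\mathbb{E}_{\bm{\mathnormal{z}}}[\bar{z}_1]\approx\lambda\mathnormal{q}$, whereas the non-robust coordinates are untouched by the change and still satisfy $\mathbb{E}_{\bm{\mathnormal{z}}}[\bar{z}_k]\approx(\eta+\lambda)\mathnormal{q}$ by Theorem~\ref{theoremm1}.

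With these expectations in hand, the gradient computation is routine. Substituting $\sigma(\bm{\mathnormal{w}}^\top\bar{\bm{\mathnormal{z}}})\approx\frac{1}{2}(1+\mathnormal{q})$ into $\mathbb{E}_{\bar{\bm{\mathnormal{z}}}}[(\sigma(\bm{\mathnormal{w}}^\top\bar{\bm{\mathnormal{z}}})-\frac{1}{2})\bar{z}_j]$ collapses it to $\frac{\mathnormal{q}}{2}\mathbb{E}_{\bar{\bm{\mathnormal{z}}}}[\bar{z}_j]$, exactly as in Theorem~\ref{theoremm2}. Then $j=1$ yields $\frac{\mathnormal{q}}{2}\cdot\lambda\mathnormal{q}=\frac{1}{2}\lambda$ and $j=k\ge2$ yields $\frac{\mathnormal{q}}{2}(\eta+\lambda)\mathnormal{q}=\frac{1}{2}(\eta+\lambda)$, using $\mathnormal{q}^2=1$ in both cases.

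The main obstacle is justifying the approximation $\textup{sign}(\bm{\mathnormal{w}}^\top\bm{\mathnormal{z}})\approx\mathnormal{q}$, i.e.\ that the robust term $\mathnormal{w}_1 z_1$ does not flip the sign set by the non-robust features. This rests on the modelling assumptions that $z_1$ has zero mean and small variance $v^2$ for OOD data while the non-robust sum concentrates around $\eta\mathnormal{q}$ with $\eta$ large, so I would make explicit that $\mathnormal{w}_1$ and $v$ are small enough relative to $\eta$ for the sign to be governed by $\mathnormal{q}$ with high probability; everything else is inherited directly from the two preceding theorems.
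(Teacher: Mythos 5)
Your proposal is correct and follows essentially the same route as the paper: recompute $\mathbb{E}_{\bm{\mathnormal{z}}}[\bar{z}_1]\approx\lambda\mathnormal{q}$ using that $\mathnormal{w}_1>0$ makes $\textup{sign}(\mathnormal{w}_1(\sigma(\bm{\mathnormal{w}}^\top\bm{\mathnormal{z}})-\frac{1}{2}))\approx\mathnormal{q}$, then reuse the Theorem~\ref{theoremm2} computation $\mathbb{E}[\partial\mathcal{L}/\partial\mathnormal{w}_j]\approx\frac{\mathnormal{q}}{2}\mathbb{E}[\bar{z}_j]$ with $\mathnormal{q}^2=1$. Your explicit concentration justification for $\textup{sign}(\bm{\mathnormal{w}}^\top\bm{\mathnormal{z}})\approx\mathnormal{q}$ is slightly more careful than the paper, which simply asserts this approximation, but it is the same argument.
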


\begin{proof}
    \begin{equation}
    \begin{gathered}
        \mathbb{E}_{\bm{\mathnormal{z}}}\left[\Bar{z}_1\right]=\mathbb{E}_{\bm{\mathnormal{z}}}\left[z_1+\lambda\cdot\textup{sign}(\nabla_{\mathnormal{z}_1}\mathcal{L}(\bm{\mathnormal{z}}, \mathnormal{t}; \bm{\mathnormal{w}}))\right]
        =\mathbb{E}_{\bm{\mathnormal{z}}}\left[z_1+\lambda\cdot\textup{sign}(\mathnormal{w}_1(\sigma(\bm{\mathnormal{w}}^\top\bm{\mathnormal{z}})-\frac{1}{2}))\right]\\
        \approx\mathbb{E}_{\bm{\mathnormal{z}}}\left[z_1\right]+\lambda\cdot q,\\
        \mathbb{E}_{\bar{\bm{\mathnormal{z}}}}\left[\frac{\partial\mathcal{L}}{\partial\mathnormal{w}_1}\right]=
        \mathbb{E}_{\bar{\bm{\mathnormal{z}}}}
        \left[
        \bar{\mathnormal{z}}_1(\sigma(\bm{\mathnormal{w}}^\top\bar{\bm{\mathnormal{z}}})-\frac{1}{2})
        \right]
        =\mathbb{E}_{\bar{\bm{\mathnormal{z}}}}
        \left[
        \bar{\mathnormal{z}}_1\cdot\sigma(\bm{\mathnormal{w}}^\top\bar{\bm{\mathnormal{z}}})
        \right]
        -\frac{1}{2}\mathbb{E}_{\bar{\bm{\mathnormal{z}}}}
        \left[
        \bar{\mathnormal{z}}_1
        \right]\\
        \approx\frac{1}{2}\mathbb{E}_{\bar{\bm{\mathnormal{z}}}}
        \left[
        \bar{\mathnormal{z}}_1\right](1+q)
        -\frac{1}{2}\mathbb{E}_{\bar{\bm{\mathnormal{z}}}}
        \left[
        \bar{\mathnormal{z}}_1
        \right]
        =\frac{q}{2}\mathbb{E}_{\bar{\bm{\mathnormal{z}}}}\left[
        \bar{\mathnormal{z}}_1\right]=\frac{1}{2}\lambda.
    \end{gathered}
    \end{equation}

\end{proof}

\section{The Theoretical Motivation of OAT in Standard Learning}\label{B}

Based on the same setup described in Section~\ref{methods} of the main manuscript, we construct the data model for OAT in the following manner:
\begin{equation}\label{eq18}
    \mathnormal{q}\stackrel{u.a.r}{\sim} \{-1, +1\},\quad
    \mathnormal{z}_1\stackrel{}{\sim}\mathcal{N}(0,\sigma_1^2),
    \quad\mathnormal{z}_2\stackrel{}{\sim}\mathcal{N}(\kappa q, \sigma_2^2), \quad\textup{where}\;\kappa\in\mathbb{R}^+,\;\sigma_1^2\ll\sigma_2^2.
\end{equation}
For simplicity, we consider only one desirable feature extractor~$\phi_1$ and one undesirable feature extractor~$\phi_2$.
In Equation~(\ref{eq18}), features $\mathnormal{z}_1=\phi_1(\bm{\mathnormal{x}})$ and $\mathnormal{z}_2$ represent the output of feature extractors $\phi_1$ and $\phi_2$, respectively.
As the OOD input vectors do not have the same desirable features as the target input vectors, the mean of $\mathnormal{z}_1$ is zero.
On the other hand, because the OOD data have the same distribution of undesirable feature as the target data, the mean of $\mathnormal{z}_2$ is non-zero.
In addition, considering that the feature extractors
%basically consists of the dot product of the input vector and the weight vector, the larger the input dimension, the greater the nullity of the weight vector.
%Hence, in the high-dimensional case, the output variance of the undesirable feature extractor is much larger than that of the desirable feature extractor for OOD data.
are trained to respond sensitively to the target input vectors, the output variance of the undesirable feature extractor is much larger than that of the desirable feature extractor for OOD data, especially in the high-dimensional case.
$\mathnormal{q}$ represents the unknown label associated with the undesirable feature, and $\kappa$ is a non-negative constant that represents the degree of correlation between the undesirable feature and the unknown label.

We can prove the following theorems for a logistic regression model that is not strongly dependent on the desirable feature.
\begin{theorem}
    The OOD data barely affect the gradient update of the weight associated with the desirable feature.
\end{theorem}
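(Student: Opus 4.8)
The plan is to mirror the structure of the adversarial computation in Theorem~\ref{theorem2}, but in the standard (perturbation-free) setting, where the OOD feature vector $\bm{\mathnormal{z}}=(\mathnormal{z}_1,\mathnormal{z}_2)$ is drawn from the data model in Equation~(\ref{eq18}) and is assigned the uniform label $\mathnormal{t}=0.5$. Differentiating the cross-entropy loss with respect to the weight $\mathnormal{w}_1$ of the desirable feature gives the same clean form used throughout Appendix~\ref{A},
\begin{equation}
    \frac{\partial\mathcal{L}}{\partial\mathnormal{w}_1}=\mathnormal{z}_1\left(\sigma(\bm{\mathnormal{w}}^\top\bm{\mathnormal{z}})-\tfrac{1}{2}\right),
\end{equation}
so the whole statement reduces to showing that $\mathbb{E}_{\bm{\mathnormal{z}}}\!\left[\mathnormal{z}_1\left(\sigma(\bm{\mathnormal{w}}^\top\bm{\mathnormal{z}})-\tfrac{1}{2}\right)\right]\approx 0$.

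The key observation I would exploit is that, by the hypothesis that the classifier is \emph{not strongly dependent on the desirable feature}, the weight $\mathnormal{w}_1$ is small, so the pre-activation $\bm{\mathnormal{w}}^\top\bm{\mathnormal{z}}=\mathnormal{w}_1\mathnormal{z}_1+\mathnormal{w}_2\mathnormal{z}_2$ is governed by the $\mathnormal{w}_2\mathnormal{z}_2$ term and $\sigma(\bm{\mathnormal{w}}^\top\bm{\mathnormal{z}})\approx\sigma(\mathnormal{w}_2\mathnormal{z}_2)$ is effectively a function of $\mathnormal{z}_2$ alone. In Equation~(\ref{eq18}) the desirable feature $\mathnormal{z}_1\sim\mathcal{N}(0,\sigma_1^2)$ has zero mean and, following the independence convention of the earlier data models, is independent of $\mathnormal{z}_2$. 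Factoring the expectation then yields
\begin{equation}
    \mathbb{E}_{\bm{\mathnormal{z}}}\!\left[\mathnormal{z}_1\left(\sigma(\mathnormal{w}_2\mathnormal{z}_2)-\tfrac{1}{2}\right)\right]
    =\mathbb{E}_{\bm{\mathnormal{z}}}\!\left[\mathnormal{z}_1\right]\cdot\mathbb{E}_{\bm{\mathnormal{z}}}\!\left[\sigma(\mathnormal{w}_2\mathnormal{z}_2)-\tfrac{1}{2}\right]=0,
\end{equation}
since $\mathbb{E}_{\bm{\mathnormal{z}}}[\mathnormal{z}_1]=0$.

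The hard part will be making the decoupling step quantitative rather than purely heuristic, since $\sigma(\bm{\mathnormal{w}}^\top\bm{\mathnormal{z}})$ does retain a weak dependence on $\mathnormal{z}_1$. A cleaner route that I would actually carry out is to keep the $\mathnormal{w}_1\mathnormal{z}_1$ term and Taylor-expand in the small quantity $\mathnormal{w}_1\mathnormal{z}_1$, writing $\sigma(\mathnormal{w}_1\mathnormal{z}_1+\mathnormal{w}_2\mathnormal{z}_2)\approx\sigma(\mathnormal{w}_2\mathnormal{z}_2)+\mathnormal{w}_1\mathnormal{z}_1\,\sigma'(\mathnormal{w}_2\mathnormal{z}_2)$; using $\mathbb{E}[\mathnormal{z}_1]=0$ and $\mathbb{E}[\mathnormal{z}_1^2]=\sigma_1^2$ this leaves
\begin{equation}
    \mathbb{E}_{\bm{\mathnormal{z}}}\!\left[\frac{\partial\mathcal{L}}{\partial\mathnormal{w}_1}\right]\approx\mathnormal{w}_1\,\sigma_1^2\,\mathbb{E}_{\mathnormal{z}_2}\!\left[\sigma'(\mathnormal{w}_2\mathnormal{z}_2)\right],
\end{equation}
which is negligible precisely because $\mathnormal{w}_1$ is small and, by the modeling assumption $\sigma_1^2\ll\sigma_2^2$, the desirable feature of the OOD data has tiny variance. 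This makes explicit that the residual OOD gradient on $\mathnormal{w}_1$ is controlled by the product $\mathnormal{w}_1\sigma_1^2$, so the desirable-feature weight is essentially untouched. I would expect the natural companion computation on $\mathnormal{w}_2$ (whose gradient scales with the correlation strength $\kappa$) to confirm that it is the undesirable-feature weight that actually moves under the update, completing the intended contrast.
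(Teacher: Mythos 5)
Your argument is essentially the paper's own proof: the paper likewise approximates $\sigma(\bm{\mathnormal{w}}^\top\bm{\mathnormal{z}})$ by $\sigma(\mathnormal{w}_2\mathnormal{z}_2)$ using the assumption that the classifier depends only weakly on the desirable feature, factors the expectation by independence, and concludes from $\mathbb{E}[\mathnormal{z}_1]=0$ that the gradient on $\mathnormal{w}_1$ vanishes (the paper even notes this holds for any target value $\mathnormal{t}$, not just $0.5$). Your additional first-order Taylor expansion, which bounds the residual by $\mathnormal{w}_1\sigma_1^2\,\mathbb{E}[\sigma'(\mathnormal{w}_2\mathnormal{z}_2)]$, is a correct and slightly more quantitative refinement of the same idea.
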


\begin{proof}
    Because we assumed a linear classifier that is not strongly dependent on the desirable feature owing to the bias of CNNs or insufficient training data, $\bm{\mathnormal{w}}^\top\bm{\mathnormal{z}}=\mathnormal{w}_1\mathnormal{z}_1+\mathnormal{w}_2\mathnormal{z}_2$ can be approximated by $\mathnormal{w}_2\mathnormal{z}_2$. Then, for the target value $\mathnormal{t}$, we specified the feature $\bm{\mathnormal{z}}$ as shown in the following equation:
    \begin{equation}\label{eq19}
    \begin{gathered}
        \mathbb{E}_{\bm{\mathnormal{z}}}\left[\frac{\partial\mathcal{L}}{\partial\mathnormal{w}_1}\right]
        =\mathbb{E}_{\bm{\mathnormal{z}}}\left[(\sigma(\bm{\mathnormal{w}}^\top\bm{\mathnormal{z}})-\mathnormal{t})\mathnormal{z}_1\right]\\
        \approx
        \mathbb{E}_{\bm{\mathnormal{z}}}\left[(\sigma(\mathnormal{w}_2\mathnormal{z}_2)-\mathnormal{t})\mathnormal{z}_1\right]
        =\mathbb{E}_{\bm{\mathnormal{z}}}\left[\sigma(\mathnormal{w}_2\mathnormal{z}_2)-\mathnormal{t}\right]\mathbb{E}_{\bm{\mathnormal{z}}}\left[\mathnormal{z}_1\right]=0.
    \end{gathered}
    \end{equation}
\end{proof}
Note that in Equation~(\ref{eq19}), the gradient of the loss function with respect to the weight value~$\mathnormal{w}_1$ is zero regardless of $\mathnormal{t}$. 
The optimal $t$ is then a value that makes $\mathnormal{w}_2$ converge to zero, thereby removing the influence of the undesirable feature.
\begin{theorem}
    When $\mathnormal{t}=0.5$, the standard learning on the OOD data leads to the weight value corresponding to the undesirable feature converging to 0.
\end{theorem}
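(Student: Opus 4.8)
The plan is to compute the expected gradient of the cross-entropy loss with respect to $w_2$ and show that it always carries the same sign as $w_2$, so that gradient descent drives $w_2$ to its unique stable fixed point at the origin. I would begin exactly as in the preceding theorem, writing the logistic-regression gradient $\partial\mathcal{L}/\partial w_2 = (\sigma(\bm{w}^\top\bm{z}) - t)\,z_2$ and invoking the standing modeling assumption that the classifier depends only weakly on the desirable feature, so that $\bm{w}^\top\bm{z} = w_1 z_1 + w_2 z_2 \approx w_2 z_2$. Substituting $t = 0.5$ then reduces the quantity of interest to $\mathbb{E}_{\bm{z}}[\partial\mathcal{L}/\partial w_2] \approx \mathbb{E}_{z_2}[(\sigma(w_2 z_2) - \tfrac{1}{2})\,z_2]$, an expectation over the marginal of $z_2$ alone.

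The key observation I would exploit is that, since $q$ is uniform on $\{-1,+1\}$, the marginal law of $z_2$ is the even mixture $\tfrac{1}{2}\mathcal{N}(\kappa,\sigma_2^2) + \tfrac{1}{2}\mathcal{N}(-\kappa,\sigma_2^2)$, which is symmetric about zero. Paired with the fact that $s \mapsto \sigma(s) - \tfrac{1}{2}$ is odd and strictly increasing, this pins down the sign of the integrand pointwise: for any $w_2 > 0$ and any $z_2 \neq 0$ the two factors $\sigma(w_2 z_2) - \tfrac{1}{2}$ and $z_2$ share a sign, so their product is strictly positive; the roles reverse for $w_2 < 0$; and the integrand vanishes identically when $w_2 = 0$. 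Integrating against the symmetric marginal therefore gives $\sign\bigl(\mathbb{E}_{z_2}[(\sigma(w_2 z_2) - \tfrac{1}{2})z_2]\bigr) = \sign(w_2)$, with equality to zero precisely at $w_2 = 0$.

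The conclusion is then immediate: the expected update $w_2 \leftarrow w_2 - \tau\,\mathbb{E}_{\bm{z}}[\partial\mathcal{L}/\partial w_2]$ decreases $w_2$ whenever it is positive and increases it whenever it is negative, so $w_2 = 0$ is the unique globally attracting fixed point and the weight on the undesirable feature converges to zero. I would combine this with the previous theorem, which shows $w_1$ receives no net gradient, to conclude that the undesirable feature is suppressed while the desirable one is left untouched.

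The hard part will not be the sign analysis, which is clean, but justifying the two approximations it rests on. The linearization $\bm{w}^\top\bm{z} \approx w_2 z_2$ is legitimate only while $|w_1 z_1|$ is genuinely negligible against $|w_2 z_2|$; because $z_1$ carries the small variance $\sigma_1^2 \ll \sigma_2^2$ and $w_1$ is taken to be small, this is credible early in training but must be stated as a modeling assumption rather than a limit. Likewise, ``converges to zero'' is really a claim about the population (expected-gradient) dynamics; making it fully rigorous would require controlling the stochastic fluctuations of the SGD iterates, which I would defer to the standard theory of stochastic approximation rather than establish from first principles.
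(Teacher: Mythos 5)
Your proposal is correct and follows essentially the same route as the paper's proof: approximate $\bm{w}^\top\bm{z}$ by $w_2 z_2$ using the weak-dependence assumption on the desirable feature, then show the expected gradient $\mathbb{E}[(\sigma(w_2 z_2)-\tfrac{1}{2})z_2]$ shares the sign of $w_2$, so gradient descent drives $w_2$ to zero. Your pointwise sign analysis is in fact slightly cleaner than the paper's ``with high probability'' phrasing (the inequality holds for every $z_2\neq 0$), and your caveats about the linearization and the population-dynamics interpretation are appropriate but do not change the argument.
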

\begin{proof}
    \begin{equation}\label{eq20}
        \mathbb{E}_{\bm{\mathnormal{z}}}\left[\frac{\partial\mathcal{L}}{\partial\mathnormal{w}_2}\right]
        =\mathbb{E}_{\bm{\mathnormal{z}}}\left[(\sigma(\bm{\mathnormal{w}}^\top\bm{\mathnormal{z}})-\frac{1}{2})\mathnormal{z}_2\right]
        \approx
        \mathbb{E}_{\bm{\mathnormal{z}}}\left[(\sigma(\mathnormal{w}_2\mathnormal{z}_2)-\frac{1}{2})\mathnormal{z}_2\right]
        .
    \end{equation}
    When $\mathnormal{w}_2>0$, $\mathnormal{z}_2\cdot\sigma(\mathnormal{w}_2\mathnormal{z}_2)>\frac{1}{2}\mathnormal{z}_2$ with high probability. Hence,
    \begin{equation}
        \mathbb{E}_{\bm{\mathnormal{z}}}\left[(\sigma(\mathnormal{w}_2\mathnormal{z}_2)-\frac{1}{2})\mathnormal{z}_2\right]>0,\quad\textup{where}\;\mathnormal{w}_2>0.
    \end{equation}
    Similarly,
    \begin{equation}
        \mathbb{E}_{\bm{\mathnormal{z}}}\left[(\sigma(\mathnormal{w}_2\mathnormal{z}_2)-\frac{1}{2})\mathnormal{z}_2\right]<0,\quad\textup{where}\;\mathnormal{w}_2<0.
    \end{equation}
\end{proof}

\section{Further Related Works}\label{c}
\paragraph{Using Unlabeled Data to Improve Adversarial Robustness}
\citet{are_labels_adv} and \citet{unlabeled_adv} analyzed the requirement of larger sample complexity for adversarially robust generalization~\citep{more_data_adversarial}. Based on a model described in a prior work~\citep{more_data_adversarial}, they theoretically proved that unlabeled data can alleviate the need for the large sample complexity of robust generalization. Therefore, they proposed a semi-supervised learning technique by augmenting the training dataset with extra unlabeled data. They used pseudo-labels obtained from a model trained with the existing training dataset. They experimented on the CIFAR-10 dataset by using the 80 Million Tiny Images dataset~\citep{80mti}~(80M-TI) as an extra training dataset, resulting in improvements of adversarial robustness of the model. \citet{incomplete_adv} also used unlabeled data for adversarial robustness by extending the distributionally robust learning~\citep{distributionally_robust_learning} to semi-supervised learning scenarios. Instead of using pseudo-labels, they used soft-labels, which are chosen from a set of labels and softened according to the loss values of data. Their results include experiments on the MNIST, CIFAR-10, and SVHN datasets. 

\citet{ratio} 

\paragraph{Comparison with Bad GAN}
\citet{dai2017good} theoretically showed that a perfect generator cannot able to enhance the generalization performance, and good semi-supervised learning actually requires a bad generator.
Through theoretical analysis, they proposed an empirical formulation to generate samples with low-density in the input space.
Compared to our method, there are two main differences between Bad GAN and OAT.
First, \citet{dai2017good} only considered semi-supervised learning settings, whereas OAT can also be applied to adversarial settings.
Second, data augmentation using GAN has clear limitations.
% \citet{dai2017good} generated low-density samples by penalizing the output with a high $p(x)$ to the generator, but this method is inefficient when the input space has a high dimension.
\citet{dai2017good} penalized high-density samples to generate low-density samples in the input space, but this method is inefficient in a high-dimensional input space.
On the other hand, OAT uses various and efficient OOD directly to regularize the model, and the research direction that develops from the use of synthetic data to the use of real data can also be found for OOD detection~\citep{outlier}.

\section{Sourcing OOD Datasets}\label{e}
We created OOD datasets from 80M-TI, using the work of \citet{unlabeled_adv} for CIFAR10 and CIFAR100, respectively.
In other words, we trained a 11-way~(1 more class for OOD) classifier on a training set consisting of the CIFAR10 dataset and 1M randomly sampled images from 80M-TI with keywords that did not appear in CIFAR10.
We then applied the classifier on 80M-TI and sorted the images based on confidence in the OOD class.
The 1M and 5M images selected in the order of the highest confidence were used for OAT-A and OAT-S, respectively.
In Table~\ref{table3}, Fusion has a batch size of $\textup{target}\frac{n}{2}+\textup{pseudo-labeled}\frac{n}{4}+\textup{OOD}\frac{n}{4}$, which would be compared with the Pseudo-label model that is trained with a batch size of $\textup{target}\frac{n}{2}+\textup{pseudo-labeled}\frac{n}{2}$.
In addition, we resized~(using a bilinear interpolation) ImageNet to dimensions of $64\times64$ and $160\times160$ and divided it into datasets containing 10 and 990 classes, respectively; these are called ImgNet10~(train set size = 9894 and test set size = 3500) and ImgNet990, respectively.
The classes were divided based on the Imagenette dataset~\citep{imagenette}, and the experimental results for the differently divided dataset~(Imagewoof) can be seen in Appendix~\ref{G}.
We increased the number of images of ImgNet990 by 10 times through random cropping and created the OOD datasets in the same process as 80M-TI.
Furthermore, we resized~(bilinear interpolation) Places365~\citep{places365} and VisDA17~\citep{visda2017} for the experiments on ImgNet10 and cropped Simpson Characters~(Simpson)~\citep{simpson_dataset} and Fashion product~(Fashion)~\citep{fashion_dataset} to dimensions of $32\times32$ for the experiments on CIFAR.

\begin{table*}[]
\centering
%\begin{tabular}{c|cccccc}
\caption{Implementation details for the experiments of OAT-A}
\begin{tabular*}{\textwidth}{c @{\extracolsep{\fill}} ccccccc}
%\toprule
Target & architecture & $\alpha$ & training steps & batch size  \\ \midrule[.1em]
\multirow{2}{*}{CIFAR} & \multirow{2}{*}{WRN-34-10~\citep{wide_resnet}} & \multirow{2}{*}{1.0} & \multirow{2}{*}{80K} & \multirow{2}{*}{128}       \\
 & & & &        \\
\midrule[.05em]

\multirow{2}{*}{ImgNet10} & \multirow{2}{*}{ResNet18~\citep{res}} & \multirow{2}{*}{1.0} & \multirow{2}{*}{15.4K} & \multirow{2}{*}{128}       \\
 & & & &        \\
\midrule[.05em]

\end{tabular*}
\label{tab:table-4}
\end{table*}
\begin{table*}[]
\centering
%\begin{tabular}{c|cccccc}
\caption{Implementation details for the experiments of OAT-S}
\begin{tabular*}{\textwidth}{c @{\extracolsep{\fill}} ccccccc}
%\toprule
Target & N & architecture & $\alpha$ & training steps & batch size         \\ \midrule[.1em]
\multirow{2}{*}{CIFAR} & \multicolumn{1}{c}{2500} & \multirow{2}{*}{ResNet18} & \multirow{2}{*}{1.0} & 4000 & 128       \\
 & \multicolumn{1}{c}{50K} & & & 78K & 128       \\
\midrule[.05em]

\multirow{2}{*}{\begin{tabular}[c]{@{}c@{}}ImgNet10\\ (64 x 64)\end{tabular}} & \multicolumn{1}{c}{100} & \multirow{2}{*}{WRN-22-10} & 1.0 & 200 & 100       \\
 & \multicolumn{1}{c}{9894} & & 0.2 & 15.4K & 128       \\
\midrule[.05em]

\multirow{2}{*}{\begin{tabular}[c]{@{}c@{}}ImgNet10\\ (160 x 160)\end{tabular}} & \multicolumn{1}{c}{100} & \multirow{2}{*}{ResNet18} & 1.0 & 400 & 100       \\
 & \multicolumn{1}{c}{9894} & & 0.1 & 38.5K & 128       \\
\midrule[.05em]

\end{tabular*}
\label{tab:table-5}
\end{table*}
\section{Implementation Details}\label{f}
For all the experiments except TRADES and OAT\textsubscript{TRADES}, the initial learning rate is set to 0.1.
The learning rate is multiplied by 0.1 at 50\% and 75\% of the total training steps, and the weight decay factor is set to $2\mathrm{e}{-4}$.
We use the same adversarial perturbation budget $\epsilon=8$, as in~\citet{pgd_attack}.
We recorded the maximum adversarial robustness of the models on the test set after the first learning rate decay in adversarial training and the empirical upper bound of the test accuracy of the models during the standard learning.
The other details are summarized in Tables~\ref{tab:table-4} and \ref{tab:table-5}.
For TRADES and OAT\textsubscript{TRADES}, we deploy a batch size of 64 and train the models using the
same configurations as \citet{trades_defense}.

\section{Further Discussion about the Effectiveness of OAT}\label{G}
We created a new ImgNet10 dataset with reference to Imagewoof~\citep{imagenette} to learn more about the effects of OAT.
Imagewoof is a subset of ImageNet, and all classes are dog breeds.
We tested whether OAT is effective for the newly constructed ImgNet10 dataset, and the experimental results of OAT-A and OAT-S are shown in Tables~\ref{tab:table-6} and \ref{tab:table-7}, respectively.
\begin{table*}[]
\centering
%\begin{tabular}{c|cccccc}
\caption{Accuracy~(\%) comparison of the OAT model with Standard and PGD on ImgNet10~(64$\times$64) under different threat models. We show the improved results compared to the counterpart of each model in bold.}
\begin{tabular*}{\textwidth}{c @{\extracolsep{\fill}} ccccccc}
%\toprule
Model & OOD & Clean & PGD20 & CW20         \\ \midrule[.1em]
\multicolumn{1}{c}{Standard} &- &75.77 & -& -\\
\multicolumn{1}{c}{PGD} &-&53.64 & 16.18& 14.81\\
\multirow{2}{*}{OAT\textsubscript{PGD}}&ImgNet990 &48.24 & \textbf{22.09}& \textbf{17.1}\\
&Places365 & 50.45 & \textbf{22.57}& \textbf{17.37}\\
\midrule[.05em]

\end{tabular*}
\label{tab:table-6}
\end{table*}
\begin{table*}[]
\centering
%\begin{tabular}{c|cccccc}
\caption{Accuracy~(\%, mean over 5 runs) comparison of the OAT models with the baseline models. We show the improved results compared to the counterpart of each model in bold.}
\begin{tabular*}{\textwidth}{c @{\extracolsep{\fill}} ccccccc}
%\toprule
Target & N & 100&250&500&1250&2500&Full         \\ \midrule[.1em]
\multirow{3}{*}{\begin{tabular}[c]{@{}c@{}}ImgNet10\\ (64 x 64)\end{tabular}}&\multicolumn{1}{c}{Standard} &19.42 & 25.48 & 31.49 & 47.04 & 57.12 & 75.77\\
&\multicolumn{1}{c}{OAT\textsubscript{ImgNet990}} & \textbf{22.74} &\textbf{30.07}& \textbf{35.16} & \textbf{48.84} & \textbf{58.14} & 75.17 \\
&\multicolumn{1}{c}{OAT\textsubscript{Places365}} & \textbf{22.36} & \textbf{30.15}&\textbf{33.57}&\textbf{48.16}&57.15 & 75.63 \\
\midrule[.05em]

\multirow{3}{*}{\begin{tabular}[c]{@{}c@{}}ImgNet10\\ (160 x 160)\end{tabular}}&\multicolumn{1}{c}{Standard} &20.03 &25.36 &30.69 &50.69 &64.82 & 81.22\\
&\multicolumn{1}{c}{OAT\textsubscript{ImgNet990}} & \textbf{23.64}& \textbf{30.05}& \textbf{37.09}& \textbf{60.01}& \textbf{70.39} & \textbf{83.16} \\
&\multicolumn{1}{c}{OAT\textsubscript{Places365}} & \textbf{23.09}& \textbf{31.74}& \textbf{37.22}& \textbf{59.45}& \textbf{69.90} & \textbf{83.04} \\
\midrule[.05em]

\end{tabular*}
\label{tab:table-7}
\end{table*}
We can see from Tables~\ref{tab:table-6} and \ref{tab:table-7} that for the new ImgNet10, OAT is largely useful for generalization.
However, from the results of ImgNet10(64$\times$64) in Table~\ref{tab:table-7}, we can speculate that OAT will help in training as a regularizer only when an excessive number of features are involved in the classification.
This is because OAT shows no performance improvement for the new ImgNet10(64$\times$64) dataset.
The new ImgNet10(64$\times$64) is believed to have lost much of the categorical features, since all the classes are dog breeds, in addition to being overly downscaled.
In other words, OAT is beneficial for the overfitting problem, but not for the underfitting problem.
We can confirm this by artificially reducing the number of training samples to increase the features that can distinguish between each categorical sample distribution, and then observe the improvement in generalization performance by applying OAT.

\section{Ablation study}\label{H}
\begin{table*}[]
\centering
\caption{Adversarial robustness~(\%) as hyperparameter~$\alpha$ changes}
\begin{tabular}{c @{\extracolsep{\fill}} ccc}
\multicolumn{1}{c}{$\alpha$} & \multicolumn{1}{c}{PGD20} & \multicolumn{1}{c}{CW20}
\\ \midrule[.1em]
\multicolumn{1}{c}{1.0}    
& \multicolumn{1}{c}{57.45}   
& \multicolumn{1}{c}{52.65}
\\
\multicolumn{1}{c}{2.0}   
& \multicolumn{1}{c}{58.25}   
& \multicolumn{1}{c}{52.16}
\\
\multicolumn{1}{c}{3.0}   
& \multicolumn{1}{c}{58.31}   
& \multicolumn{1}{c}{52.02}
\\
\multicolumn{1}{c}{4.0}   
& \multicolumn{1}{c}{59.04}   
& \multicolumn{1}{c}{51.71}
\\
\multicolumn{1}{c}{5.0}    
& \multicolumn{1}{c}{59.48}   
& \multicolumn{1}{c}{51.24}
\\ \cline{1-3}
\end{tabular}
\label{tab:table-8}
\end{table*}
Our experiments indicate that it is possible to enhance adversarial robustness by removing the contributions of non-robust features existing in additional data through OAT and transfer the consistency to in-distribution data.
However, the increase in the robustness against targeted attacks appears smaller than that against untargeted attacks in our method.
To gain insight into this phenomenon, we train the OAT models for various $\alpha$ values and investigate the difference between untargeted and targeted attacks.
In Table~\ref{tab:table-8}, it can be seen that the greater the influence of OOD on the training process is, the higher the robustness against PGD20 and the lower the robustness against CW20 are.
According to our theoretical analysis, it can be understood that there are many features used for untargeted attacks in OOD data, whereas relatively few features exist for targeted attacks.
Because targeted attacks are stronger than untargeted attacks~\citep{cw_attack}, the results of Table~\ref{tab:table-8} provide empirical evidence for the trade-off between the transferability of adversarial perturbations and the strength of adversarial attacks.

A similar trend was reported in the study of \citet{Alvin}.
The authors showed that input gradient adversarial matching~(IGAM) can transfer robustness across different tasks.
Their results show that IGAM-trained models have similar or higher robustness than baseline models against weak attacks, such as FGSM or low-step PGD attacks, but are vulnerable to strong attacks. The finetuning-based method can also be regarded as an attempt to increase the robustness by using other datasets, but the abovementioned phenomenon is not observed in the method proposed by \citet{pretrain_hendrycks}.
This is because the method does not transfer the robustness learned from other datasets to the target dataset but rather uses a function learned from a dataset that has a large sample complexity and a data distribution similar to that of the target data with little modification.
This can be confirmed from the small number of training iterations and small learning rate involved by the fine-tuning process. Moreover, applying the same method to a dataset with a small sample size or far from the target data distribution has no effect~\citep{Alvin}.

\section{When UID data are available}\label{I}
\begin{table*}[]
\centering
%\begin{tabular}{c|cccccc}
\caption{Error rate~(\%) comparison of OAT with Mixup on CIFAR10, CIFAR100, and ImgNet10. We show the best result for each target dataset in bold.}
\begin{tabular*}{\textwidth}{c @{\extracolsep{\fill}} ccc}
%\toprule
Model & Target & Ratio & Error rate    \\ \midrule[.1em]
\multicolumn{1}{c}{Standard}  & \multirow{4}{*}{CIFAR10} & - & 5.54 \\
\multicolumn{1}{c}{OAT\textsubscript{80M-TI}} & & - & 4.80  \\
\multicolumn{1}{c}{Mixup} & & 0.0 & 4.11  \\
\multicolumn{1}{c}{OAT\textsubscript{80M-TI}+Mixup} & & 0.6 & \textbf{3.88} \\
\midrule[.05em]

\multicolumn{1}{c}{Standard}  & \multirow{4}{*}{CIFAR100} & - & 25.13 \\
\multicolumn{1}{c}{OAT\textsubscript{80M-TI}} & & - & 24.35  \\
\multicolumn{1}{c}{Mixup} & & 0.0 & 22.63  \\
\multicolumn{1}{c}{OAT\textsubscript{80M-TI}+Mixup} & & 0.6 & \textbf{21.60} \\
\midrule[.05em]

\multicolumn{1}{c}{Standard}  & \multirow{6}{*}{\begin{tabular}[c]{@{}c@{}}ImgNet10\\ (64 x 64)\end{tabular}} & - & 13.07   \\
\multicolumn{1}{c}{OAT\textsubscript{ImgNet990}} & & - & 12.12   \\
\multicolumn{1}{c}{OAT\textsubscript{Places365}}& & - & 11.63    \\
\multicolumn{1}{c}{Mixup} & & 0.0 & 11.60 \\
\multicolumn{1}{c}{OAT\textsubscript{ImgNet990}+Mixup} & & 0.7 & 10.77   \\
\multicolumn{1}{c}{OAT\textsubscript{Places365}+Mixup}& & 0.7 & \textbf{10.39}    \\
\midrule[.05em]

\multicolumn{1}{c}{Standard}  & \multirow{6}{*}{\begin{tabular}[c]{@{}c@{}}ImgNet10\\ (160 x 160)\end{tabular}} & - & 9.09   \\
\multicolumn{1}{c}{OAT\textsubscript{ImgNet990}} & & - & 8.13   \\
\multicolumn{1}{c}{OAT\textsubscript{Places365}}& & - & 8.58    \\
\multicolumn{1}{c}{Mixup} & & 0.0 & 7.04 \\
\multicolumn{1}{c}{OAT\textsubscript{ImgNet990}+Mixup} & & 0.3 & \textbf{6.46}   \\
\multicolumn{1}{c}{OAT\textsubscript{Places365}+Mixup}& & 0.3 & 6.90    \\
\midrule[.05em]
 
\end{tabular*}
\label{tab:table-9}
\end{table*}
We train the OAT models in conjuction with UID data as follows:
\begin{equation}
\begin{gathered}
    \min_{\theta}\alpha_{\textup{in}}\mathop\mathbb{E}_{(\bm{\mathnormal{x}}_t,y)\in\mathcal{D}_t}\left[\max_{\mathbf{\bm{\delta}} \in S} \mathcal{L}(\bm{\mathnormal{x}}_t + \bm{\delta},\mathnormal{y};\theta{}) \right]
    +\alpha_{\textup{o}}\mathop\mathbb{E}_{\bm{\mathnormal{x}}_o\in\mathcal{D}_o}\left[\max_{\mathbf{\bm{\epsilon}} \in S} \mathcal{L}(\bm{\mathnormal{x}}_o + \bm{\epsilon},\mathnormal{t}_{\textup{unif}};\theta{}) \right]\\
    +\alpha_{\textup{UID}}\mathop\mathbb{E}_{\bm{\mathnormal{x}}_u\in\mathcal{D}_{\textup{UID}}}\left[\max_{\mathbf{\bm{\zeta}} \in S} \mathcal{L}(\bm{\mathnormal{x}}_u + \bm{\zeta},\mathnormal{y}_{\textup{pseudo}};\theta{}) \right].
\end{gathered}
\end{equation}
In the experiment of Figure~\ref{figure2}(a), the Pseudo-label models are trained via a PGD-based approach, and every batch consists of 64 CIFAR-10 data and 64 pseudo-labeled data.
The OAT+Pseudo-label models are also trained via a PGD-based approach, and every batch consists of 64 CIFAR-10 data, 32 pseudo-labeled data, and 32 OOD data~(80M-TI). The hyperparameters ($\alpha_\textup{in}, \alpha_\textup{o}, \alpha_\textup{UID}$) are set to ($\frac{1}{3}, \frac{1}{3}, \frac{1}{3}$).
All other conditions are the same as described in Appendix~\ref{f}.
In the experiment of Figure~\ref{figure2}(b), we train the models with the
same configurations as \citet{unlabeled_adv}, but every batch for the OAT+RST model comprises 128 pseudo-labeled data, 64 CIFAR-10 data, and 64 OOD data~(the RST model has a batch size of 256). The hyperparameters ($\alpha_\textup{in}, \alpha_\textup{o}, \alpha_\textup{UID}$) are set to (0.25, 0.25, 0.5).

\section{Experimental results for Mixup}\label{J}
\citet{mixup} proposed a data augmentation method named Mixup. Mixup generates training examples as $\Tilde{\mathnormal{x}}=\alpha\mathnormal{x}_i+(1-\alpha)\mathnormal{x}_j$ and $\Tilde{\mathnormal{y}}=\alpha\mathnormal{y}_i+(1-\alpha)\mathnormal{y}_j$, where
$(\mathnormal{x}_i,\mathnormal{y}_i)$ and $(\mathnormal{x}_j,\mathnormal{y}_j)$ are two examples drawn at random from the training data, and $\alpha \in [0,1]$.
Here, we show that even outside of the convex combination of training data can be effectively regularized using OOD data.
Table~\ref{tab:table-9} shows that the models trained with the combination of OAT and Mixup always output the highest accuracy.
The OAT+Mixup models are trained with the following algorithm:
\begin{algorithm}
\caption{OAT+Mixup}
\begin{algorithmic}[1]
\label{algo2}
\REQUIRE Ratio $\gamma$, Target dataset $\mathcal{D}_t$, OOD dataset $\mathcal{D}_o$, uniform distribution label $t_{\textup{unif}}$, batch size $n$, training iterations $T$, learning rate $\tau$
\FOR{$t=1$ \textbf{to} $T$}
\STATE $(X_t,Y_t)
=\textup{SAMPLE}(\textup{dataset}=\mathcal{D}_t, \textup{size}=n)$
\STATE $X_o
=\textup{SAMPLE}(\textup{dataset}=\mathcal{D}_o, \textup{size}=\floor{n*\gamma})$
\STATE $(\Bar{X}_t,\Bar{Y}_t) \leftarrow \textup{PERMUTE}(X_t,Y_t)$
\STATE $(\Bar{X}_t\left[0:\floor{n*\gamma}\right],\Bar{Y}_t\left[0:\floor{n*\gamma}\right]) \leftarrow (X_o,t_{\textup{unif}}) $
\STATE $(X,Y) \leftarrow \textup{MIXUP}(X_t,Y_t,\Bar{X}_t,\Bar{Y}_t)$
\STATE \emph{model update}:
\STATE $\bm{\theta} \leftarrow \bm{\theta} - \tau\cdot\nabla_{\theta}\textup{AVERAGE}(\mathcal{L}(X,Y;\bm{\theta}))$
\ENDFOR
\STATE \textbf{Output:} trained model parameter $\bm{\theta}$
\end{algorithmic}
\end{algorithm}

\end{document}